
\documentclass{article}

\usepackage{microtype}
\usepackage{graphicx}
\usepackage{subcaption}
\usepackage{booktabs} 

\usepackage{hyperref}

\PassOptionsToPackage{numbers, compress}{natbib}
\usepackage[preprint]{neurips_2026}
\usepackage[utf8]{inputenc}
\usepackage[T1]{fontenc}
\usepackage{url}
\usepackage{xcolor}

\usepackage{amsmath}
\usepackage{amssymb}
\usepackage{mathtools}
\usepackage{amsthm}
\usepackage{multirow}

\usepackage[capitalize,noabbrev]{cleveref}

\usepackage{comment}
\usepackage{macros}

\usepackage{amsmath,amsfonts,bm}
\usepackage{amssymb, amsthm,mathtools}










\def\eqref#1{equation~\ref{#1}}









\def\1{\bm{1}}










\DeclareMathAlphabet{\mathsfit}{\encodingdefault}{\sfdefault}{m}{sl}
\SetMathAlphabet{\mathsfit}{bold}{\encodingdefault}{\sfdefault}{bx}{n}














\usepackage{colortbl}



\usepackage{adjustbox}
\usepackage{titletoc}
\usepackage{dirtytalk}
\usepackage{makecell}
\usepackage[most]{tcolorbox} 

\definecolor{mapcolorRGB}{RGB}{101,114,156}
\colorlet{mapcolor}{mapcolorRGB}

\definecolor{dakrpurpletag}{RGB}{108,72,197}
\definecolor{lightpurpletag}{RGB}{205,193,255}
\newtcbox{\captag}{
  enhanced,
  nobeforeafter,
  tcbox raise base,
  boxrule=0.5pt,
  top=0mm,
  bottom=0mm,
  right=0mm,
  left=0mm,
  arc=2pt,
  boxsep=1.5pt,
  before upper={\vphantom{dlg}},
  colframe=dakrpurpletag,
  coltext=mapcolor!25!black,
  colback=lightpurpletag
}

\definecolor{lightgraytag}{RGB}{242,242,242}
\newtcbox{\alttag}{
  enhanced,
  nobeforeafter,
  tcbox raise base,
  boxrule=0.5pt,
  top=0mm,
  bottom=0mm,
  right=0mm,
  left=0mm,
  arc=2pt,
  boxsep=1.5pt,
  before upper={\vphantom{dlg}},
  colframe=mapcolor!50!black,
  coltext=mapcolor!25!black,
  colback=lightgraytag 
}

\theoremstyle{plain}
\newtheorem{theorem}{Theorem}[section]
\theoremstyle{definition}
\newtheorem{definition}[theorem]{Definition}

\theoremstyle{remark}

\newtheorem{observation}[theorem]{Observation}

\usepackage[textsize=tiny]{todonotes}

\usepackage{multirow}
\usepackage{caption}

\begin{document}

\title{A Geometric Unification of Concept Learning with Concept Cones}

\definecolor{primary}{RGB}{49, 46, 129}
\definecolor{secondary}{RGB}{49, 46, 129}
\newcommand{\auth}[1]{\textbf{#1}}
\newcommand{\affil}[1]{{\small #1}}
\newcommand{\amail}[1]{{\small\texttt{#1}}}
\newcommand{\afn}[1]{\textcolor{primary}{$^{#1}$}}

\author{
\auth{Alexandre Rocchi}{\textcolor{secondary}{$^\star$}\afn{a}} \quad
\auth{Thomas Fel}{\textcolor{secondary}{$^\star$}\afn{b}} \quad 
\auth{Gianni Franchi}{\textcolor{secondary}{}\afn{a}} \quad 
\vspace{4pt}\\
\afn{a}\affil{AMIAD} \quad
\afn{b}\affil{Kempner Institute, Harvard University} \quad 
\vspace{3mm}\\
{
\small 
\textcolor{primary}{\texttt{\{alexandre.rocchi-henry,gianni.franchi\}@polytechnique.edu}}
}
\vspace{-4mm}
}

\maketitle

\begin{abstract}

We study two main approaches to interpretability: Concept Bottleneck Models (CBMs), which define concepts using supervision, and Sparse Autoencoders (SAEs), which discover them from data. While CBMs use supervision to align activations with human-labeled concepts, SAEs rely on sparse coding to uncover emergent ones. We show that both paradigms instantiate the same geometric structure: 
each learns a set of linear directions in activation space whose nonnegative combinations form a \emph{concept cone}.  Supervised and unsupervised methods thus differ not in kind but in how they select this cone.
SAEs are known to be unstable: small changes in sparsity, dictionary size, or training can lead to very different decompositions with similar reconstruction. This makes it difficult to assess whether the learned concepts are meaningful. We address this issue by using CBMs as partial anchors. Rather than requiring full agreement, we check whether a subset of SAE concepts can recover CBM concepts.
We formalize this idea through metrics that measure containment and alignment between cones. These metrics reveal how inductive biases—such as sparsity and expansion factor—affect the emergence of meaningful concepts, and identify regimes where SAEs best recover human-relevant structure. Overall, our work unifies supervised and unsupervised concept discovery through a shared geometric framework, providing principled metrics to measure SAE progress and assess how well discovered concepts align with plausible human concepts.

\end{abstract}
    
\section{Introduction}

As artificial intelligence systems reach superhuman performance across many tasks~\citep{lecun2015deep,Serre2019DeepLT}, understanding how they represent and process information has become a central challenge~\citep{saeed2023explainable,alecu2022can}. This is not only a scientific question: in domains such as healthcare~\citep{vamathevan2019applications}, autonomous driving~\citep{grigorescu2020survey}, and scientific discovery~\citep{jumper2021highly}, trust depends on the ability to interpret model decisions~\citep{doshivelez2017rigorous,gilpin2018explaining}. Vision models are no exception. With architectures such as Vision Transformers~\citep{dosovitskiy2020image,dehghani2023scaling,zhai2022scaling,oquab2023dinov2} reaching human-level performance, understanding their internal representations has become a key research goal.

Early interpretability methods focused on attributing predictions to input regions~\citep{simonyan2013deep,zeiler2014visualizing,bach2015pixel,springenberg2014striving,smilkov2017smoothgrad,sundararajan2017axiomatic,selvaraju2017gradcam,fong2017meaningful,fel2021sobol,novello2022making,muzellec2023gradient}. These approaches help identify \emph{where} models attend, but provide limited insight into \emph{what} they represent. Later work showed that such methods often capture superficial sensitivities rather than the internal structure of representations~\citep{hase2020evaluating,hsieh2020evaluations,nguyen2021effectiveness,kim2021hive,sixt2020explanations,fel2021cannot}.

To go beyond this limitation, research has shifted toward \emph{concept-based interpretability}~\citep{kim2018interpretability,poeta2023concept}, which aims to identify meaningful directions in representation space~\citep{bau2017network,ghorbani2019towards,zhang2021invertible,fel2023craft,graziani2023concept,vielhaben2023multi,kowal2024understanding,kowal2024visual,fel2023holistic}. Early work tried to associate single neurons with concepts, but it is now well understood that neurons are polysemantic and basis-dependent~\citep{arora2018linear,elhage2022superposition,gorton2024missing}. This has led to a distributed view, where concepts correspond to patterns across many units rather than individual neurons~\citep{ghorbani2017interpretation,fel2023craft,zhang2021invertible}.

This perspective is supported by the \emph{Linear Representation Hypothesis} (LRH)~\citep{elhage2022superposition,park2023linear,costa2025flat}, which suggests that neural networks encode many features as sparse linear combinations of a shared set of directions. Under this view, interpretability becomes a problem of recovering this underlying dictionary~\citep{fel2023holistic}. Sparse autoencoders (SAEs)~\citep{makhzani2013k,cunningham2023sparse,bricken2023monosemanticity,joseph2025steering} operationalize this idea by learning sparse representations over an overcomplete dictionary. Applied to large models such as DINOv2~\citep{oquab2023dinov2,darcet2023vision,baldassarre2025back,fel2025archetypal}, SAEs uncover many recurring patterns that can be interpreted as concepts.

However, unsupervised concept discovery faces a fundamental limitation: \emph{non-identifiability}~\citep{klindt2023identifying,klindt2025superposition,locatello2019challenging}. Many different dictionaries can reconstruct the same activations equally well, and reconstruction loss does not tell us which one is meaningful. In practice, SAEs are unstable: small changes in sparsity, dictionary size, initialization, or training schedule lead to very different decompositions with similar reconstruction error~\citep{fel2025archetypal,paulo2025sparse}. Moreover, automatic interpretability scores can be misleading, as even random dictionaries may achieve similar values~\citep{heap2025sparse}. As a result, practitioners lack reliable tools to assess whether a learned decomposition is meaningful.

In parallel, \emph{Concept Bottleneck Models} (CBMs)~\citep{koh2020concept,
oikarinen2023label,vandenhirtz2024stochastic,alukaev2023cross,chauhan2023interactive,jeyakumar2022automatic,tan2024explain,kazmierczak2023clip,moayeri2023text,oikarinen2022clip,panousis2023discover,yuksekgonul2022post} follow a different approach. CBMs enforce interpretability by design, introducing human-defined concepts as intermediate variables. This provides explicit semantic structure, but relies on labeled concepts and predefined taxonomies that may be incomplete or misaligned with learned representations~\citep{margeloiu2021concept,debole2025if}.

Despite their differences, both approaches aim to uncover the same underlying structure. SAEs discover concepts from data, while CBMs impose them through supervision. In this work, we propose to connect these two paradigms through a common geometric framework.

\textbf{Our perspective.}
We view both CBMs and SAEs as instances of dictionary learning, where activations are decomposed into concept directions and corresponding coefficients. This leads to a unified representation in terms of \emph{concept cones}. Within this framework, we can compare supervised and unsupervised concept spaces in a principled way. However, we do not expect perfect alignment between SAEs and CBMs. CBMs capture only a limited set of human-defined concepts, while SAEs discover a much richer set of features. Our key idea is therefore the following: \emph{A decomposition should be considered more plausible with respect to a given semantic reference if a subset of SAE atoms can recover concepts that are independently validated and expected to be present.}

More precisely, CBMs provide a \emph{partial reference}: they encode a set of concepts that we hypothesize should be present. If a small subset of SAE atoms can reconstruct these concepts through sparse nonnegative combinations, this suggests that the SAE has identified a meaningful decomposition among the many possible ones allowed by non-identifiability~\citep{locatello2019challenging}. Conversely, if none of the SAE atoms align with a recognizable structure, there is reason to question the entire decomposition. This perspective turns CBMs into a practical tool for evaluating SAEs. In language models, SAEs can often be interpreted directly by linking concepts to words, making them naturally accessible to humans \citep{bricken2023monosemanticity}. In vision, however, such direct grounding is not available, and human annotation remains the standard way to associate meaning to representations. In this context, it is hard to assess SAEs, and using CBMs as a reference is not fundamentally different from existing practices, but rather provides a structured and reproducible alternative. Rather than imposing a full ground truth, they provide a \emph{partial sanity check}: they indicate whether the expected structure is present, without constraining the discovery of additional concepts.

\textbf{Contributions.}
In this work:
\begin{itemize}
\item We show that CBMs and SAEs can be understood within a common framework of dictionary learning and concept cones, providing a principled way to compare supervised and unsupervised concept representations.

\item We introduce a set of complementary metrics: coverage, sparsity, geometric alignment, activation alignment, and regression predictability, and provide theoretical results clarifying what each metric measures and why none of them is sufficient on its own.

\item We formalize the idea of \emph{partial alignment}: a good SAE is not one that matches all CBM concepts, but one where a sparse subset of atoms recovers meaningful concepts while allowing additional structure to emerge.

\item We demonstrate empirically that this framework provides actionable guidance for SAE design. Through extensive benchmarks, we analyze how key factors such as sparsity, dictionary size, and expansion ratio influence this alignment. Our results reveal a “sweet spot” of parsimony where SAEs best capture CBM-like representations.  

\end{itemize}

By combining the scalability of unsupervised methods with the semantic grounding of supervised concepts, our approach provides a practical framework for evaluating and improving interpretable representations in modern neural networks.
\section{Concept cones: a unifying geometry for concept learning}

\label{sec:theory}

In this section, we formalize a shared view of concept learning through the lens of dictionary learning.
Our aim is to show that Concept Bottleneck Models (CBMs) and Sparse Autoencoders (SAEs) solve the same inverse problem under different regularization.

\begin{figure}[t]
    \vspace{-2mm}
    \centering
    \includegraphics[width=0.45\linewidth]{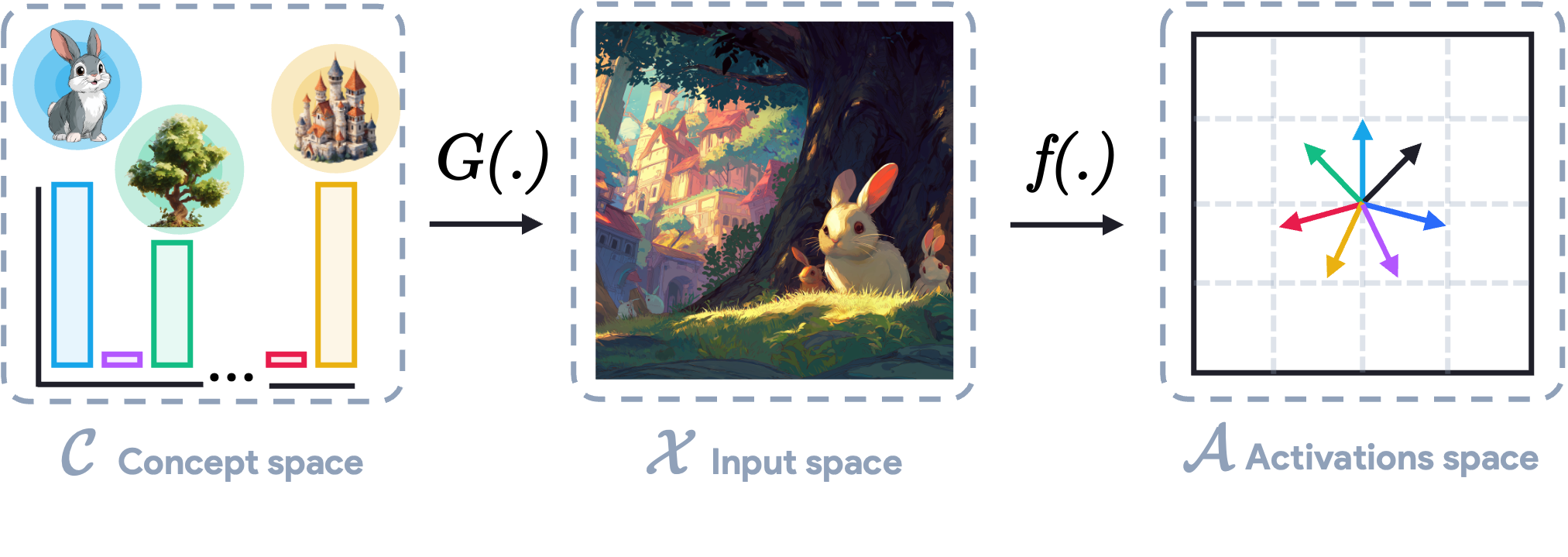}
    \vspace{-4mm}

   \caption{
\textbf{Illustration of Concept Learning Spaces.} 
Hidden factors $\bm{c} \in \mathcal{C}$ (human-labeled concept space) generate images $\bm{x} \in \mathcal{X}$ via $\dgp$. 
A network $\f$ maps these images to an activation space $\mathcal{A}$, where concepts are linearly decodable. 
Concept extraction seeks a mapping $\bm{H}: \mathcal{A} \to \mathcal{C}$ to recover the original hidden factors from the model's internal representations.
}\label{fig:concept_schema}\label{fig:concept_schema}
    \vspace{-4mm}
\end{figure}

\subsection{Theoretical Ssetup}

\paragraph{Notation.}
Let \(\SX\subset\mathbb{R}^{H\times W\times 3}\) be the input space and \(\SA\subset\mathbb{R}^{d}\) a feature space of a fixed representation map \(\f:\SX\to\SA\).
Given a dataset \(\X=(\x_1,\ldots,\x_n)\subset\SX\), define the activation matrix \(\A=\f(\X)\in\mathbb{R}^{n\times d}\) with rows \(\a_i=\f(\x_i)^{\top}\).
Let \(c\) denote the number of concepts, \(\D\in\mathbb{R}^{c\times d}\) a concept dictionary (rows are atoms), and \(\Z\in\mathbb{R}^{n\times c}\) a code matrix (rows are per-example concept coordinates).

\paragraph{Data-generating process.} 
We assume observations arise from a latent space of compositional factors of variation.
Let \(\mathcal{C}\subset\mathbb{R}^{K}\) (here \(K\) is the dimension of the latent space, so the number of independents attributes, e.g.: hair color, pose, lighting, expression, shapes, textures,  etc.) be a latent space with product measure \(P_{\bm{c}}=\bigotimes_{k=1}^{K}P_{k}\), where \(P_{k}\) is the probability distribution of the $k$-th latent factor, the coordinate \(c_k\).
Let \(\dgp:\mathcal{C}\to\SX\) be a measurable map that is \(C^{1}\) and injective on a full-measure subset of \(\mathcal{C}\).
The data distribution on \(\SX\) is the pushforward \(P_{\bm{x}}=\dgp_{\#}P_{\bm{c}}\).
When \(\dgp\) is a diffeomorphism onto its image and \(p_{\bm{c}}\) is a density,
\(
p_{\bm{x}}(\x) = p_{\bm{c}}(\dgp^{-1}(\x))\,\bigl|\det \bm{J}_{\dgp^{-1}}(\x)\bigr|.
\)
Each coordinate \(c_k\) encodes a part or attribute that contributes to \(\x=\dgp(\bm{c})\) through composition.

\subsection{Dictionary-learning view and cone view}
\paragraph{Unsupervised concept extraction as dictionary learning.}
As shown by previous work, most of the unsupervised concept extraction methods boils down to dictionary learning that seek to approximate the activation matrix $\A \in \mathbb{R}^{n \times d}$ via a factorization $\A \approx \Z \D$, where $\Z \in \mathbb{R}^{n \times c}$ contains per-example codes (concepts) and $\D \in \mathbb{R}^{c \times d}$ is a dictionary whose rows represent concept directions (atoms). The different methods of unsupervised concept extraction that have been proposed \textit{only differ in their constraints} :
\vspace{-6mm}

\begin{equation}
\footnotesize
\begin{aligned}\label{eq:dico_constraints}
    & (\Z^\star, \D^\star) = \argmin_{\Z, \D} || \A - \Z \D ||^2_F, ~~\text{subject to} 
    ~~ &\begin{cases}
        \forall i, \Z_i \in \{ \e_1, \ldots, \e_k \}, & \text{(\textbf{ACE} \citet{ghorbani2019towards})}, \\
        \D \D^\tr = \mathbf{I}, & \text{(\textbf{ICE} \citet{zhang2021invertible})}, \\
        \Z \geq 0, \D \geq 0, & \text{(\textbf{CRAFT} \citet{fel2023craft})}, \\
        \D_i \in \conv(\A) & \text{(\textbf{Archetypal Analysis} \citet{fel2025archetypal})}, \\
        \Z = \bm{\Psi}_{\theta}(\A), ||\Z||_0 \leq K, & \text{(\textbf{SAE}s) \citet{bricken2023monosemanticity}}.
    \end{cases}
\end{aligned}
\end{equation}

where \(\bm{\Psi}(\cdot)\) is generally a linear encoder equipped with a ReLU activation and a sparsity projection (such as TopK~\citep{gao2024scaling}, BatchTopK~\citep{bussmann2024batchtopk}, or JumpReLU~\citep{hindupur2025projecting}).  
Across all these formulations, the learned codes \(\Z\) are nonnegative or zero, meaning that activations are reconstructed through additive, parts-based combinations of concept directions.

\paragraph{Supervised concept extraction with linear bottlenecks.}
Having cast unsupervised methods as dictionary learning, we now show that Concept Bottleneck Models (CBMs) also fit a regular template that we should clarify now. Essentially, it adds supervision on the codes to align them with labeled concepts. 
For activations \(\a_i \in \SA\), a CBM predicts concept scores \(\hat{\bm{c}}_i\) and labels \(\hat{\bm{y}}_i\) through a linear bottleneck. 
Most CBM variants seek to learn a dictionary \(\mathbf{W}_c \in \mathbb{R}^{c \times d}\) whose rows represent concept directions and codes (concept coefficient) \(\hat{\bm{c}}_i \in [0,1]^c\) that are predicted from activations via \(\hat{\bm{c}}_i = \sigma(\a_i \mathbf{W}_c^\top + \bm{b}_c)\), followed by a linear classifier \(\hat{\bm{y}}_i = \mathrm{softmax}(\hat{\bm{c}}_i \mathbf{W}_y^\top + \bm{b}_y)\). 
The different CBMs that have been proposed \textit{only differ in their supervision and training}:
\begin{equation}
\footnotesize 
\begin{aligned}\label{eq:cbm_constraints}
& (\mathbf{W}_c^\star, \mathbf{W}_y^\star) = \argmin_{\mathbf{W}_c, \mathbf{W}_y} \sum_{i=1}^n \mathcal{L}_{\mathrm{CE}}(\hat{\bm{y}}_i, \bm{y}_i) + \lambda \mathcal{L}_{\mathrm{BCE}}(\hat{\bm{c}}_i, \bm{c}_i), 
&~~~\text{subject to}\\
~~ &\begin{cases}
    \lambda > 0, \text{ joint training}, & \text{(\textbf{Joint CBM} \citet{koh2020concept})}, \\
    \lambda > 0, \text{ sequential training}, & \text{(\textbf{Sequential CBM} \citet{koh2020concept})}, \\
    \lambda = 0, \mathbf{W}_c = \D_{\texttt{LLM}}, & \text{(\textbf{Label-Free CBM} \citet{oikarinen2023label})}, \\
    \lambda = 0, \mathbf{W}_c = \D_{\texttt{Bank}}, & \text{(\textbf{Post-hoc CBM} \citet{yuksekgonul2022post})}, \\
    \lambda = 0, \mathbf{W}_c = \D_{\texttt{SAE}}, & \text{(\textbf{DN CBM} \citet{rao2024discover})}, \\
    \hat{\bm{c}}_i \sim \mathcal{N}(\sigma(\bm{\mu}(\a_i)), \bm{\Sigma}(\a_i)), & \text{(\textbf{Stochastic CBM} \citet{vandenhirtz2024stochastic})}, \\
    \lambda = 0, \mathbf{W}_c = \D_{\texttt{Clip}}, & \text{(\textbf{Clip-QDA} \citet{kazmierczak2023clip})}.
    \end{cases}
\end{aligned}
\end{equation}

Here \(\D_{\texttt{LLM}}\) denotes concepts generated by large language models (e.g., GPT-3) and embedded via CLIP, \(\D_{\texttt{Bank}}\) represents predefined concept banks or post-hoc extracted directions (e.g., via \textit{Broden} dataset), \(\D_{\texttt{SAE}}\) refers to dictionary atoms learned by SAEs, and \(\D_{\texttt{CLIP}}\) denotes principal components of CLIP text embeddings.
Note that some methods introduce architectural variations: Post-hoc CBM may include residual connections, CLIP-QDA uses quadratic discriminant analysis instead of linear classification, and Stochastic CBM samples concept activations from learned distributions.
Independent, sequential, and joint training regimes correspond to different optimization schedules on the same objective, thus preserving the bottleneck geometry.
For detailed descriptions of each variant, see Appendix \ref{app:cbm_variants}.

Now that we have studied the two different frameworks of unsupervised and supervised concept extraction,  a natural question arises: \textit{what structure unifies these seemingly distinct approaches?}
We now show that a possible answer is a specific geometric structure that we will now describe.

\paragraph{Concept cones: a unifying geometric framework.}
A key observation unifying supervised and unsupervised approaches is that the activation function of the CBM ensures all concept codes (coefficients) satisfy \(\bm{c} \geq 0\), exactly as in unsupervised dictionary learning where codes are constrained to be nonnegative (Equation~\ref{eq:dico_constraints}).
Thus both supervised methods (CBMs) and unsupervised methods (SAEs, CRAFT, etc.) produce \textit{nonnegative codes} and learn a dictionary whose rows encode concept directions—\(\mathbf{W}_c\) for CBMs and \(\D\) for unsupervised extractors.
The geometric consequence is that the representable activation subspace is characterized by the \emph{conic hull} generated by these concept directions.
This conical structure provides a unified view of concept extraction across approaches.

\begin{definition}[Concept Cone]
\label{def:concept_cone}
Given a concept dictionary \(\D \in \mathbb{R}^{c \times d}\) (equivalently \(\mathbf{W}_c\) for CBMs), we define the associated \emph{concept cone} as
\begin{equation}
\begin{aligned}
\mathcal{C}_{\D} &= \left\{ \bm{v} \in \mathbb{R}^d ~:~ \bm{v} = \bm{\alpha}^\top \D \text{ for some } \bm{\alpha} \in \mathbb{R}_+^c \right\} 
&= \mathrm{cone}(\D_1, \ldots, \D_c),
\end{aligned}
\label{eq:concept_cone}
\end{equation}
where \(\D_i \in \mathbb{R}^d\) denotes the \(i\)-th row (concept direction) of \(\D\), and \(\mathrm{cone}(\cdot)\) denotes the conic hull.
\end{definition}

This geometric perspective yields several important insights formalized in the following observation.

\begin{observation}[Concept Cones Unify Extraction Methods]
\label{prop:concept_cone_unification}
Let \(\D \in \mathbb{R}^{c \times d}\) be a concept dictionary learned by any method in ~\cref{eq:dico_constraints} or \cref{eq:cbm_constraints}. Then:
\begin{enumerate}[label=(\roman*)]
    \item \textbf{Projection operator:} Both supervised and unsupervised methods implicitly learn a projection operator \(\bm{\Pi}_{\mathcal{C}_{\D}}: \mathbb{R}^d \to \mathcal{C}_{\D}\) that maps activations onto the concept cone via \(\bm{\Pi}_{\mathcal{C}_{\D}}(\a) = \hat{\bm{c}}^\top \D\) where \(\hat{\bm{c}} \geq 0\).
    \item \textbf{Shared geometric structure:} Supervised methods (CBMs) and unsupervised methods optimize for different objectives -- task performance vs. reconstruction -- but both search for a convex cone \(\mathcal{C}_{\D}\) in activation space that captures the relevant subspace of \(\SA\).
    \item \textbf{Concept recoverability:} A concept direction \(\bm{v} \in \mathbb{R}^d\) learned by one method is recoverable from dictionary \(\D\) learned by another if and only if \(\bm{v} \in \mathcal{C}_{\D}\), i.e., there exists \(\bm{\alpha} \in \mathbb{R}_+^c\) such that \(\bm{v} = \bm{\alpha}^\top \D\). Agreement between methods is thus quantified by their cone overlap.
\end{enumerate}
\end{observation}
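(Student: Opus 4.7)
The plan is to verify each of the three sub-claims by tracing through the constraints introduced in \eqref{eq:dico_constraints} and \eqref{eq:cbm_constraints} together with Definition~\ref{def:concept_cone}. Since the observation is largely structural, the proof will consist mainly of careful bookkeeping of signs, of the form of the reconstruction, and of the definition of the conic hull.

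For claim (i), I would begin by showing that every method listed produces reconstructions lying inside \(\mathcal{C}_{\D}\). In the unsupervised family: ACE uses one-hot codes \(\e_j\in\mathbb{R}_+^c\); CRAFT enforces \(\Z\geq 0\) explicitly; Archetypal Analysis uses simplicial codes \(\Z\in\Delta^c\subset\mathbb{R}_+^c\); and SAEs use an encoder \(\bm{\Psi}_\theta\) composed with ReLU/TopK/JumpReLU that outputs \(\Z\geq 0\). In the CBM family, the sigmoid activation \(\hat{\bm{c}}=\sigma(\a\mathbf{W}_c^\top+\bm{b}_c)\in[0,1]^c\subset\mathbb{R}_+^c\) is nonnegative by construction. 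Hence the reconstruction \(\hat{\bm{c}}^\top\D\) (or \(\hat{\bm{c}}^\top\mathbf{W}_c\)) is a conic combination of atoms and lies in \(\mathcal{C}_{\D}\). The implicit operator is then defined as \(\bm{\Pi}_{\mathcal{C}_{\D}}(\a)\triangleq\hat{\bm{c}}(\a)^\top\D\), where \(\hat{\bm{c}}(\a)\) is the code assigned by the method; this is a (not necessarily metric) projection onto \(\mathcal{C}_{\D}\).

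For claim (ii), I would argue by reparameterization. Every objective in \eqref{eq:dico_constraints} and \eqref{eq:cbm_constraints} depends on the dictionary \(\D\) only through the set of reachable reconstructions, which is exactly the cone \(\mathcal{C}_{\D}\) (possibly shifted by the CBM bias). The optimization variable can therefore be recast as the cone itself, drawn from the family \(\{\mathcal{C}_{\D}:\D\in\mathbb{R}^{c\times d}\}\); the methods differ only in how a candidate cone is scored — reconstruction error for unsupervised approaches, task loss (plus optional concept supervision) for CBMs. For claim (iii), the equivalence is definitional: by Definition~\ref{def:concept_cone}, \(\bm{v}\in\mathcal{C}_{\D}\) iff there exists \(\bm{\alpha}\in\mathbb{R}_+^c\) with \(\bm{v}=\bm{\alpha}^\top\D\). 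Hence any direction learned by one method is reproducible by another iff it lies in the latter's cone, and method agreement is naturally measured by \(\mathcal{C}_{\D}\cap\mathcal{C}_{\D'}\), motivating the containment metrics used later in the paper.

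The main subtlety, and really the only nontrivial point, lies in claim (i) for methods whose codes are not unrestricted nonnegative reals, namely ACE (one-hot codes) and Archetypal Analysis (simplicial codes). These codes are contained in \(\mathbb{R}_+^c\), so the conic membership still holds, but the cone \(\mathcal{C}_{\D}\) is strictly larger than the set of attainable reconstructions; the statement remains correct (reconstructions lie \emph{in} the cone) but one should not claim surjectivity onto it. A secondary caveat concerns ICE, whose orthonormality constraint does not by itself force \(\Z\geq 0\); I would flag this as a mild restriction on the scope of the observation rather than a defect, noting that the unifying geometry applies precisely to the subfamily with nonnegative codes, which in practice covers all supervised CBM variants and the dominant unsupervised extractors.
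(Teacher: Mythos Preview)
Your proposal is correct and follows essentially the same structure as the paper's proof in Appendix~\ref{app:proof_cone_unification}: verify nonnegativity of the codes case-by-case to establish (i), contrast the optimization criteria (reconstruction vs.\ task loss) for (ii), and invoke Definition~\ref{def:concept_cone} directly for (iii). Your caveats about ICE (orthogonality alone does not force \(\Z\geq 0\)) and about ACE/Archetypal (codes lie in a strict subset of \(\mathbb{R}_+^c\), so the cone over-approximates the reachable set) are more explicit than the paper's treatment and are worth retaining.
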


\begin{figure*}[t]
    \vspace{-3mm}
    \centering
    \includegraphics[width=0.6\linewidth]{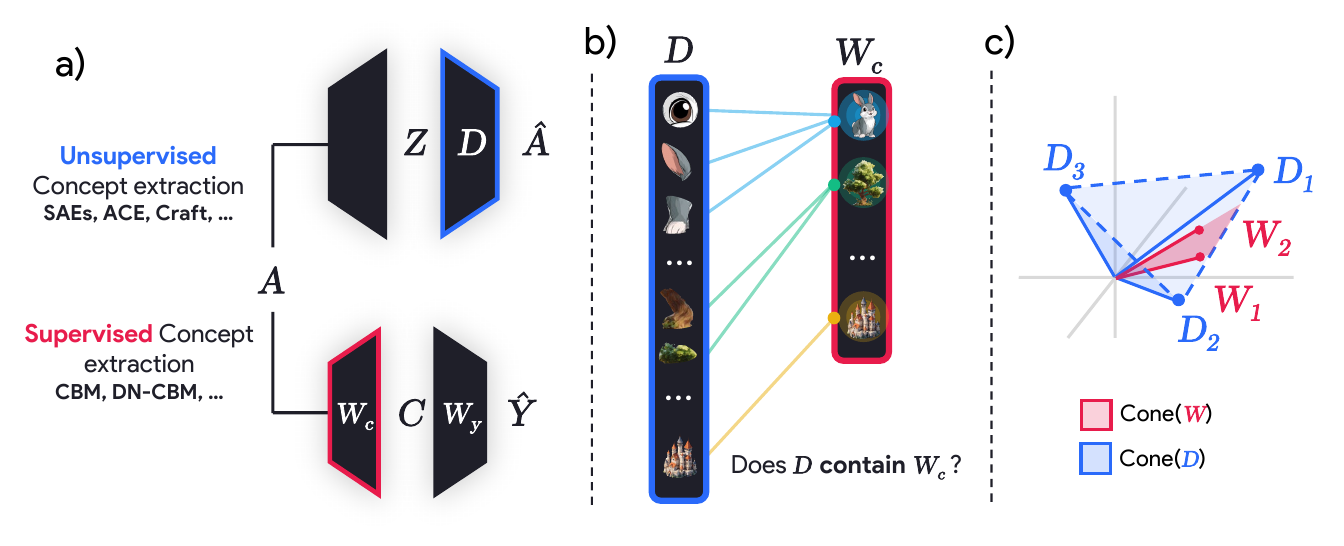}
    \vspace{-5mm}
\caption{\textbf{CBMs as Anchors for Unsupervised Concept Discovery.} While supervised CBMs learn directions $\mathbf{W}_c$ aligned with human-defined concepts, unsupervised Sparse Autoencoders (SAEs) decompose activations $\mathbf{A}$ into a dictionary $\mathbf{D}$ and sparse codes $\mathbf{Z}$. We investigate whether SAE-discovered directions contain or sparsely approximate those in CBMs. Geometrically, this tests if the supervised concept cone $\mathcal{C}_{\mathbf{W}_c} = \{\mathbf{W}_c^\top \bm{\beta} : \bm{\beta} \ge 0\}$ is contained within the unsupervised cone $\mathcal{C}_{\mathbf{D}} = \{\mathbf{D}^\top \bm{\alpha} : \bm{\alpha} \ge 0\}$. In this view, CBMs serve as partial geometric anchors. Their containment within the SAE cone quantifies whether unsupervised discovery recovers human-plausible concepts~\citep{jacovi2020towards}.
}
    \label{fig:toy_setup}
\end{figure*}

See details in Appendix~\ref{app:proof_cone_unification}.
This geometric unification motivates our approach: rather than treating supervised and unsupervised concept extraction as separate paradigms, we view them as alternative procedures for selecting a concept cone in activation space and we clarify \textit{how} CBM can actually guide unsupervised concept research: by giving some desirable concept cone that an SAE should recover.
In the next section, we operationalize this perspective by introducing metrics to quantify cone overlap and studying empirically how CBM and SAE cones align.

\subsection{Operationalization of cone overlap}
\label{sec:cone_metrics}

\textbf{Towards measuring concept recovery.} The geometric framework established in Proposition~\ref{prop:concept_cone_unification} clarifies the relationship between supervised and unsupervised concept extraction, but also raises a natural question: \emph{do SAEs actually discover plausible concepts~\citep{jacovi2020towards}, as encoded by CBMs?}
We do not expect perfect alignment -- SAEs discover a broad spectrum of concepts to reconstruct activations, while CBMs focus on a small set of task-relevant, human-desirable (or at minimum plausible \citep{jacovi2020towards}) concepts.
However, our assumption is that the SAE concept cone \(\mathcal{C}_{\D_{\text{SAE}}}\) should \emph{contain} or closely approximate the CBM concept cone \(\mathcal{C}_{\mathbf{W}_c}\).
More precisely, we expect that a \emph{sparse subset} of SAE concepts can recover CBM concepts through nonnegative combinations.
Thanks to the cone geometry, we can now formalize this intuition and derive concrete metrics to measure it. Before formally introducing our metrics, we propose to clarify exactly the role that CBM can play and why they matter for unsupervised concepts extraction research.

\textbf{Why CBM concepts matter for SAE evaluation.}
A critical challenge in SAE research is the absence of ground truth for what constitutes a "correct" concept decomposition.
Recall from Definition~\ref{def:concept} that a concept \(c_k\) is formally a coordinate of a measurable right-inverse \(\bm{H}:\SA\to\mathcal{C}\) recovering latent factors \(\bm{c}\) from activations \(\a\).
However, this inverse is generally non-unique~\citep{locatello2019challenging,klindt2025superposition}: many factorizations \(\bm{H}\) can approximately satisfy \(\bm{H}\circ\f\circ\dgp(\bm{c}) \approx \bm{c}\), and without access to the true data generating process \(\dgp\) or latent factors \(\bm{c}\), we cannot determine which is "correct."
This theoretical ambiguity manifests empirically: minor changes in hyperparameters (sparsity penalty, dictionary size, initialization) yield drastically different learned dictionaries \(\D_{\text{SAE}}\), all achieving similar reconstruction error~\citep{fel2025archetypal,paulo2025sparse}.
Recent work has proliferated diverse SAE variants -- standard SAEs~\citep{bricken2023monosemanticity}, TopK~\citep{gao2024scaling}, JumpReLU~\citep{rajamanoharan2024jumping}, Transcoders~\citep{hanna2024have}, Matching-Pursuit~\citep{costa2025flat} -- yet without ground truth, evaluation relies on proxy metrics (reconstruction loss, sparsity, task performance) that do not directly measure concept plausibility.
This is where CBMs provide crucial value: they encode a human-curated basis of concepts deemed plausible~\citep{jacovi2020towards} and desirable through explicit supervision or expert knowledge.
While CBM concepts are not ground truth (they reflect human choices, biases, and annotation limitations) they are a first order approximation of the kind of structure we seek SAEs to discover.
The concept cone framework (Definition~\ref{def:concept_cone}) operationalizes this intuition: we can quantify how well an SAE cone \(\mathcal{C}_{\D_{\text{SAE}}}\) aligns with a CBM cone \(\mathcal{C}_{\mathbf{W}_c}\) through concrete metrics.
This alignment does not constitute ground truth validation but rather a \emph{desirability signal}: an SAE that recovers human-annotated concepts through sparse, nonnegative combinations provides evidence of latching onto plausible structure, making its decompositions more likely to be meaningful, useful, and actionable.
Moreover, by comparing multiple SAE variants on their ability to recover the same CBM concepts, we can assess which inductive biases (sparsity mechanisms, expansion factors, architectures) systematically yield more plausible decompositions, providing actionable design insights.

\textbf{Formalizing the containment hypothesis.}
Having established our central hypothesis : CBMs provide human-desirable concept directions that can serve as anchors for evaluating or guiding SAEs; while recognizing their limitation as human-constructed rather than ground-truth factors, we now turn to a formal statement of this idea. The \emph{containment hypothesis} asserts that if an unsupervised dictionary faithfully captures the meaningful structure of the representation space, then the concept directions learned by a CBM should be contained within, or well-approximated by, the cone spanned by the unsupervised atoms.  
In the following, we translate this hypothesis into measurable quantities that express how much of the CBM concept space is recoverable from an SAE dictionary.
Formally, for a CBM concept direction \(\bm{v}_i \in \mathbb{R}^d\) (the \(i\)-th row of \(\mathbf{W}_c\)), we seek nonnegative coefficients \(\bm{\alpha}_i \in \mathbb{R}_+^{c_{\text{SAE}}}\) such that \(\bm{v}_i \approx \bm{\alpha}_i^\top \D_{\text{SAE}}\).
By Observation~\ref{prop:concept_cone_unification}(iii), this is equivalent to asking whether \(\bm{v}_i \in \mathcal{C}_{\D_{\text{SAE}}}\) or lies close to it.
Crucially, we further hypothesize that this recovery should be \emph{sparse}: only a small number of SAE concepts should be needed to reconstruct each CBM concept, reflecting the compositional structure of human-interpretable notions.
This leads us to formulate the following sparse nonnegative reconstruction problem for each CBM concept:
\begin{equation}
\bm{\alpha}_i^* = \argmin_{\bm{\alpha} \geq 0} \|\bm{v}_i - \bm{\alpha}^\top \D_{\text{SAE}}\|_2^2 + \lambda \|\bm{\alpha}\|_1
\label{eq:sparse_recovery}
\end{equation}
where \(\lambda > 0\) controls sparsity.
The reconstruction error \(\|\bm{v}_i - \bm{\alpha}_i^{*\top} \D_{\text{SAE}}\|_2\) measures how well the CBM concept lies in (or near) the SAE cone, while the sparsity \(\|\bm{\alpha}_i^*\|_0\) quantifies the complexity of the representation.

\subsection{Metrics}

\textbf{Measuring the geometric bridge.} The cone formalism provides a geometric bridge between supervised and unsupervised concept learning.   To assess how closely the unsupervised concept cone \(\mathcal{C}_{\D_{\text{SAE}}}\) aligns with the supervised concept cone \(\mathcal{C}_{\mathbf{W}_c}\),
we introduce a set of complementary quantitative measures that capture containment, geometric alignment, and activation-level correspondence. Together, these metrics turn the qualitative notion of ``concept alignment’’ into a concrete and measurable relation between CBMs and SAEs.

\textbf{Coverage.}
For each CBM concept direction \(\bm v_i\), i.e. the \(i\)-th row of \(\mathbf{W}_c\), we solve equation \ref{eq:sparse_recovery}. Then, we summarize cone containment with the coverage score
\begin{equation}
\mathrm{Cov}
=
\frac{
\sum_i
\|\bm{\alpha}_i^{*\top}\D_{\mathrm{SAE}}\|_2^2
}{
\sum_i
\|\bm v_i\|_2^2
}.
\label{coverage}
\end{equation}

A coverage close to one means that most CBM concept energy is captured by the SAE cone.  
A low coverage means that important CBM directions are missing from the SAE dictionary.

\textbf{Geometric alignment.}
Coverage tells us whether CBM concepts are contained in the SAE cone, but it does not tell us whether individual SAE atoms point in the same directions as CBM concepts.  
To measure axis-level similarity, we compute
\(
\rho_{\mathrm{geom}}
=
\frac{1}{c_{\mathrm{SAE}}}
\sum_j
\max_i
|\mathrm{corr}(\bm d_j,\bm w_i)|,
\)
where \(\bm d_j\) is an SAE atom and \(\bm w_i\) is a CBM direction. This metric checks whether each SAE atom is close to some CBM direction.  
However, a strict one-to-one match can be too restrictive.  
An SAE atom may correspond to a mixture of CBM concepts, and a CBM concept may be reconstructed by several SAE atoms.  
For this reason, in practice we also use a top-\(k\) version:
\begin{equation}
\rho_{\mathrm{geom}}^{(k)}
=
\frac{1}{c_{\mathrm{SAE}}}
\sum_j
\frac{1}{k}
\sum_{i\in \mathcal{N}_k(\bm d_j)}
|\mathrm{corr}(\bm d_j,\bm w_i)|,
\end{equation}
where \(\mathcal{N}_k(\bm d_j)\) is the set of the \(k\) CBM concepts most correlated with \(\bm d_j\).  
This softens the one-to-one assumption and better captures overlapping or compositional concepts.

\textbf{Activation alignment.}
Two atoms may be geometrically close but behave differently on data because the models use different encoders, thresholds, nonlinearities, or sparsity mechanisms.  
Therefore, we also compare their activation patterns over the dataset:
\(
\rho_{\mathrm{act}}
=
\frac{1}{c_{\mathrm{SAE}}}
\sum_j
\max_i
|\mathrm{corr}(\Z^{\mathrm{SAE}}_{\cdot j},
\Z^{\mathrm{CBM}}_{\cdot i})|.
\)
This measures whether SAE and CBM concepts activate on similar examples.  
As above, we also consider a top-\(k\) version to avoid forcing a strict one-to-one correspondence:
\begin{equation}
\rho_{\mathrm{act}}^{(k)}
=
\frac{1}{c_{\mathrm{SAE}}}
\sum_j
\frac{1}{k}
\sum_{i\in \mathcal{N}_k(\Z^{\mathrm{SAE}}_{\cdot j})}
|\mathrm{corr}(\Z^{\mathrm{SAE}}_{\cdot j},
\Z^{\mathrm{CBM}}_{\cdot i})|.
\end{equation}

We summarize geometric and activation alignment with
\(
\rho_{\mathrm{uni}}
=
\frac{\rho_{\mathrm{geom}}+\rho_{\mathrm{act}}}{2}.
\)

\textbf{Regression predictability.}
The previous metrics compare concepts locally, either as directions or as activation columns.  
We also measure whether the whole CBM concept space can be predicted from the SAE activations.  
We fit a linear map
\(
\hat{\Z}^{\mathrm{CBM}}
=
\Phi(\Z^{\mathrm{SAE}})
\)
and compute
\(
R^2
=
1-
\frac{
\|\Z^{\mathrm{CBM}}-\hat{\Z}^{\mathrm{CBM}}\|_F^2
}{
\|\Z^{\mathrm{CBM}}-\bar{\Z}^{\mathrm{CBM}}\|_F^2
}.
\)
A high \(R^2\) means that the information contained in the CBM concept activations can be linearly decoded from SAE activations.  
This does not imply a one-to-one concept match: the prediction may use distributed combinations of many SAE atoms.

\textbf{Sample-level agreement.}
To understand whether the SAE--CBM correspondence is concentrated or spread out, we compute
\(
p_i
=
\frac{1}{c_{\mathrm{SAE}}}
\left|\{j:m(j)=i\}\right|,
\qquad
m(j)
=
\arg\max_i
|\mathrm{corr}(\Z^{\mathrm{SAE}}_{\cdot j},
\Z^{\mathrm{CBM}}_{\cdot i})|.
\)
This shows whether many SAE atoms map to the same CBM concept or whether matches are distributed across concepts.
Then, we compare which concepts are active on each sample.  
For each sample \(t\), we compute
\(
\mathcal{T}^{\mathrm{CBM}}_t
=
\mathrm{Top}\text{-}k(\Z^{\mathrm{CBM}}_{t,\cdot}),
\qquad
\mathcal{T}^{\mathrm{SAE}}_t
=
\mathrm{Top}\text{-}k(\Z^{\mathrm{SAE}}_{t,\cdot}).
\)
We map SAE indices to CBM concepts using \(m(\cdot)\):
\(
\widehat{\mathcal{T}}^{\mathrm{SAE}\to\mathrm{CBM}}_t
=
\{m(j):j\in\mathcal{T}^{\mathrm{SAE}}_t\}.
\)
We then compute precision, recall, and F1 between
\(
\widehat{\mathcal{T}}^{\mathrm{SAE}\to\mathrm{CBM}}_t
\quad\text{and}\quad
\mathcal{T}^{\mathrm{CBM}}_t.
\)
This measures whether the most active SAE concepts correspond to the most active CBM concepts on the same input.

\textbf{Why several metrics are needed.}
No single metric is enough to compare CBMs and SAEs.  
Coverage tells us if CBM concepts are inside the SAE cone, but not if atoms are aligned.  
Geometric alignment compares directions, but not how they behave on data.  
Activation alignment compares behavior, but not whether the recovery is simple (sparse).  
Finally, \(R^2\) measures global predictability, but not one-to-one concept matching. We detail these differences in Appendix~\ref{sec:metric_theory}. In particular:
(i) high coverage does not imply high \(\rho_{\mathrm{geom}}\);
(ii) high \(\rho_{\mathrm{geom}}\) does not imply high \(\rho_{\mathrm{act}}\);
(iii) high \(\rho_{\mathrm{act}}\) does not imply sparse recovery;
(iv) high \(R^2\) does not imply one-to-one alignment.
These metrics are complementary and together provide a more complete view of SAE–CBM alignment
\section{Experiments}
\label{sec:experiments}

\textbf{Experimental settings.}
{We train three vision backbones (ResNet-50, ViT, and DINOv2) on ImageNet and CUB, and additionally a ResNet-50 on Husky vs. Wolf \citep{ribeiro2016should} to analyze classifier bias ((Appendix \ref{sec:Application}) and their role in optimizing CBM construction(Appendix \ref{sec:ApplicationV2}). }
On top of these backbones, we train six families of SAEs: 
\texttt{Vanilla}, \texttt{TopK}\citep{gao2024scaling}, \texttt{BatchTopK}\citep{bussmann2024batchtopk}, \texttt{Jump}\citep{rajamanoharan2024jumping}, \texttt{MP}\citep{costa2025flat}, and \texttt{Archetypal}\citep{fel2025archetypal}. 
Unless otherwise stated, the expansion factor and sparsity parameters are chosen following standard SAE practice, namely the expansion factor is equal to $\times 2$ and the sparsity parameters are equal to $0.005$. The value of k  for F1, $\rho_{\mathrm{act}}^{(k)}$ and $\rho_{\mathrm{geom}}^{(k)}$ is set to 20.  The CBM construction and detailed settings are given in Appendix~\ref{sec:experimentalseting}. We conduct several complementary analyses:
\nn{(i)} a sanity-check experiment validating the behavior of the metrics under controlled conditions;
\nn{(ii)} a study of how SAE variants affect geometric and statistical alignment;
\nn{(iii)} an analysis of the effects of expansion factors and sparsity constraints;
\nn{(iv)} a layer-wise exploration to study the influence of depth.

\textbf{Sanity check experiment.}
The goal of this sanity check is to test whether our metrics distinguish a trained \texttt{TopK} SAE from an untrained, randomly initialized one. This is important because recent work suggests that random SAEs can obtain strong auto-interpretability scores~\citep{heap2025sparse}. Results are shown in Table~\ref{tab:sanity}.
As expected, the trained SAE achieves higher geometric and statistical alignment with the CBM. The random SAE obtains lower correlation and \(R^2\) scores, showing that its activations are less predictive of CBM concepts and capture less coherent semantic structure. The only exception is Coverage, which is higher for the random SAE. This behavior is expected: under the assumptions studied in Appendix~\ref{subsec:ConeContainement}, Proposition~\ref{prop:RecoMono} shows that a random cone can artificially inflate coverage. We think that the Coverage of the random cone is artificially inflated due to the lack of learned structure (Appendix \ref{sec:SanityV2}). 
Appendix~\ref{appendix:baselineCBM} shows that a random CBM baseline lowers all metrics, confirming the value of human-labeled concepts.
Thus, high Coverage alone is not sufficient. The four metrics must be considered together. In this case, high Coverage but low alignment and low \(R^2\) correctly reveal that the random SAE does not recover meaningful CBM-like concepts. In contrast, the trained SAE shows stronger alignment across metrics, confirming that our framework is sensitive to training quality. We observe the same trend when evaluating SAE checkpoints during training; see Appendix~\ref{Apendix:training}.

\begin{table}[t]
\centering
\scriptsize
\setlength{\tabcolsep}{3pt}
\renewcommand{\arraystretch}{1.05}

\begin{minipage}[t]{0.47\textwidth}
\centering
\caption{Sanity check results for ResNet-50, ViT-B/16 and DINOv2 on CUB and ImageNet, using the \texttt{TopK} variant with target $L_0 = 0.005$. Values are reported as mean $\pm$ standard deviation over three seeds.}
\label{tab:sanity}
\resizebox{\linewidth}{!}{
\begin{tabular}{c|lcccc}
\toprule
~ & \textbf{Dataset / Method}
& $\mathrm{Coverage}$
& $R^2$
& $\mathrm{Top}\text{-}k~F_1$
& $\rho_{\mathrm{uni}}$ \\
\midrule

\multirow{4}{*}{\rotatebox{90}{R50}}
& CUB / normal
& $0.2483 \pm 0.0184$
& $\mathbf{0.9566 \pm 0.0046}$
& $\mathbf{0.0920 \pm 0.0026}$
& $\mathbf{0.0829 \pm 0.0021}$ \\

& CUB / random
& $\mathbf{0.6500 \pm 0.0356}$
& $0.8503 \pm 0.0099$
& $0.0599 \pm 0.0046$
& $0.0600 \pm 0.0006$ \\

\cmidrule(l){2-6}

& IN / normal
& $0.1718 \pm 0.0044$
& $\mathbf{0.7090 \pm 0.0027}$
& $\mathbf{0.1371 \pm 0.0001}$
& $\mathbf{0.2078 \pm 0.0019}$ \\

& IN / random
& $\mathbf{0.8683 \pm 0.0067}$
& $0.3184 \pm 0.0020$
& $0.0378 \pm 0.0013$
& $0.1011 \pm 0.0006$ \\

\midrule

\multirow{4}{*}{\rotatebox{90}{ViT}}
& CUB / normal
& $0.1953 \pm 0.0046$
& $\mathbf{0.8070 \pm 0.0126}$
& $\mathbf{0.1292 \pm 0.0048}$
& $\mathbf{0.0922 \pm 0.0019}$ \\

& CUB / random
& $\mathbf{0.7498 \pm 0.0000}$
& $0.7067 \pm 0.0118$
& $0.0726 \pm 0.0026$
& $0.0919 \pm 0.0004$ \\

\cmidrule(l){2-6}

& IN / normal
& $0.0127 \pm 0.0003$
& $\mathbf{0.7630 \pm 0.0009}$
& $\mathbf{0.1719 \pm 0.0003}$
& $\mathbf{0.3428 \pm 0.0010}$ \\

& IN / random
& $\mathbf{0.9140 \pm 0.0249}$
& $0.2043 \pm 0.0013$
& $0.0473 \pm 0.0004$
& $0.1340 \pm 0.0003$ \\

\midrule

\multirow{4}{*}{\rotatebox{90}{DINOv2}}
& CUB / normal
& $0.1393 \pm 0.0223$
& $\mathbf{0.8597 \pm 0.0208}$
& $\mathbf{0.1242 \pm 0.0031}$
& $\mathbf{0.1269 \pm 0.0059}$ \\

& CUB / random
& $\mathbf{0.7369 \pm 0.0333}$
& $0.7706 \pm 0.0223$
& $0.0902 \pm 0.0029$
& $0.1022 \pm 0.0010$ \\

\cmidrule(l){2-6}

& IN / normal
& $0.0098 \pm 0.0000$
& $\mathbf{0.7670 \pm 0.0006}$
& $\mathbf{0.1511 \pm 0.0006}$
& $\mathbf{0.3532 \pm 0.0006}$ \\

& IN / random
& $\mathbf{0.8678 \pm 0.0176}$
& $0.2670 \pm 0.0021$
& $0.0477 \pm 0.0012$
& $0.1643 \pm 0.0017$ \\

\bottomrule
\end{tabular}
}
\end{minipage}
\hfill
\begin{minipage}[t]{0.47\textwidth}
\centering
\caption{Comparison of different SAE variants on CUB using ResNet-50, ViT and DINOv2. Values are reported as mean $\pm$ standard deviation over three seeds.}
\label{tab:sae_comparison}
\resizebox{\linewidth}{!}{
\begin{tabular}{c|lcccc}
\toprule
\textbf{Dataset-Backbone} 
& \textbf{SAE Type} 
& \textbf{Coverage} 
& $R^2$ 
& $\mathrm{Top}\text{-}k~F_1$ 
& $\rho_{\mathrm{uni}}$ \\ 
\midrule

\multirow{6}{*}{\rotatebox{90}{CUB--R50}} 
& \texttt{Vanilla}    
& $0.7292 \pm 0.0467$ 
& $0.9775 \pm 0.0059$ 
& $0.0733 \pm 0.0011$ 
& $\mathbf{0.1081 \pm 0.0019}$ \\

& \texttt{TopK}       
& $0.2548 \pm 0.0450$ 
& $0.9559 \pm 0.0085$ 
& $0.0926 \pm 0.0007$ 
& $0.0828 \pm 0.0048$ \\

& \texttt{BatchTopK}  
& $0.6412 \pm 0.0190$ 
& $0.9616 \pm 0.0081$ 
& $0.0925 \pm 0.0068$ 
& $0.0868 \pm 0.0042$ \\

& \texttt{Jump}       
& $\mathbf{0.7320 \pm 0.0427}$ 
& $\mathbf{0.9778 \pm 0.0061}$ 
& $0.0730 \pm 0.0024$ 
& $0.1046 \pm 0.0018$ \\

& \texttt{MP}         
& $0.6481 \pm 0.0533$ 
& $0.6859 \pm 0.0408$ 
& $0.0543 \pm 0.0035$ 
& $0.0350 \pm 0.0002$ \\

& \texttt{Archetypal} 
& $0.5410 \pm 0.0265$ 
& $0.9553 \pm 0.0077$ 
& $\mathbf{0.0987 \pm 0.0006}$ 
& $0.0915 \pm 0.0058$ \\

\midrule

\multirow{6}{*}{\rotatebox{90}{CUB--ViT}} 
& \texttt{Vanilla}    
& $0.5758 \pm 0.0208$ 
& $\mathbf{0.8985 \pm 0.0163}$ 
& $0.1015 \pm 0.0043$ 
& $\mathbf{0.1324 \pm 0.0037}$ \\

& \texttt{TopK}       
& $0.1905 \pm 0.0181$ 
& $0.7983 \pm 0.0182$ 
& $0.1290 \pm 0.0024$ 
& $0.0901 \pm 0.0060$ \\

& \texttt{BatchTopK}  
& $0.2790 \pm 0.0029$ 
& $0.8329 \pm 0.0152$ 
& $0.1274 \pm 0.0022$ 
& $0.0963 \pm 0.0056$ \\

& \texttt{Jump}       
& $0.5767 \pm 0.0218$ 
& $0.8969 \pm 0.0154$ 
& $0.0989 \pm 0.0070$ 
& $0.1299 \pm 0.0037$ \\

& \texttt{MP}         
& $\mathbf{0.5954 \pm 0.0299}$ 
& $0.7740 \pm 0.0246$ 
& $\mathbf{0.1328 \pm 0.0024}$ 
& $0.0706 \pm 0.0030$ \\

& \texttt{Archetypal} 
& $0.2740 \pm 0.0029$ 
& $0.8369 \pm 0.0141$ 
& $0.1300 \pm 0.0020$ 
& $0.1021 \pm 0.0061$ \\

\midrule

\multirow{6}{*}{\rotatebox{90}{CUB--DINOv2}} 
& \texttt{Vanilla}    
& $0.7498 \pm 0.0275$ 
& $\mathbf{0.9431 \pm 0.0130}$ 
& $0.1067 \pm 0.0013$ 
& $\mathbf{0.1402 \pm 0.0015}$ \\

& \texttt{TopK}       
& $0.1407 \pm 0.0207$ 
& $0.8590 \pm 0.0208$ 
& $0.1237 \pm 0.0032$ 
& $0.1269 \pm 0.0061$ \\

& \texttt{BatchTopK}  
& $0.3773 \pm 0.0191$ 
& $0.8726 \pm 0.0195$ 
& $0.1234 \pm 0.0021$ 
& $0.1114 \pm 0.0042$ \\

& \texttt{Jump}       
& $\mathbf{0.7518 \pm 0.0294}$ 
& $0.9408 \pm 0.0136$ 
& $0.1078 \pm 0.0015$ 
& $0.1385 \pm 0.0017$ \\

& \texttt{MP}         
& $0.6306 \pm 0.0089$ 
& $0.8317 \pm 0.0258$ 
& $0.1207 \pm 0.0014$ 
& $0.0838 \pm 0.0019$ \\

& \texttt{Archetypal} 
& $0.3687 \pm 0.0152$ 
& $0.8744 \pm 0.0199$ 
& $\mathbf{0.1249 \pm 0.0024}$ 
& $0.1169 \pm 0.0044$ \\

\bottomrule
\end{tabular}
}
\end{minipage}

\vspace{-2mm}
\end{table}

\vspace{-2mm}
\textbf{Jump and vanilla SAEs show superior alignment.}
We compare several SAE variants on CUB and ImageNet using ResNet-50, ViT, and DINOv2 backbones. Results are reported in Table~\ref{tab:sae_comparison}, with comparable sparsity levels when feasible. The main experiments use a Joint CBM~\cite{koh2020concept} as the reference; in Appendix~\ref{Appendix:indjointseq}, we show that using other CBM variants leads to similar conclusions.
All variants achieve high $R^2$ values, indicating substantial predictive alignment with CBM representations. However, \texttt{Jump} and \texttt{Vanilla} consistently exhibit superior coverage alongside competitive geometric alignment, suggesting their regularization schemes facilitate discovery of broader concept repertoires without sacrificing directional correspondence. \texttt{Archetypal} provides better F1 results. 

While no method dominates uniformly across all metrics, these results indicate that sparsity mechanism constitutes a consequential architectural choice shaping both the geometry and plausibility of learned concept spaces.

\begin{figure*}[t]
\vspace{-3mm}
\begin{center}
\begin{tabular}{ccc}
  
 \includegraphics[width=0.25\linewidth]{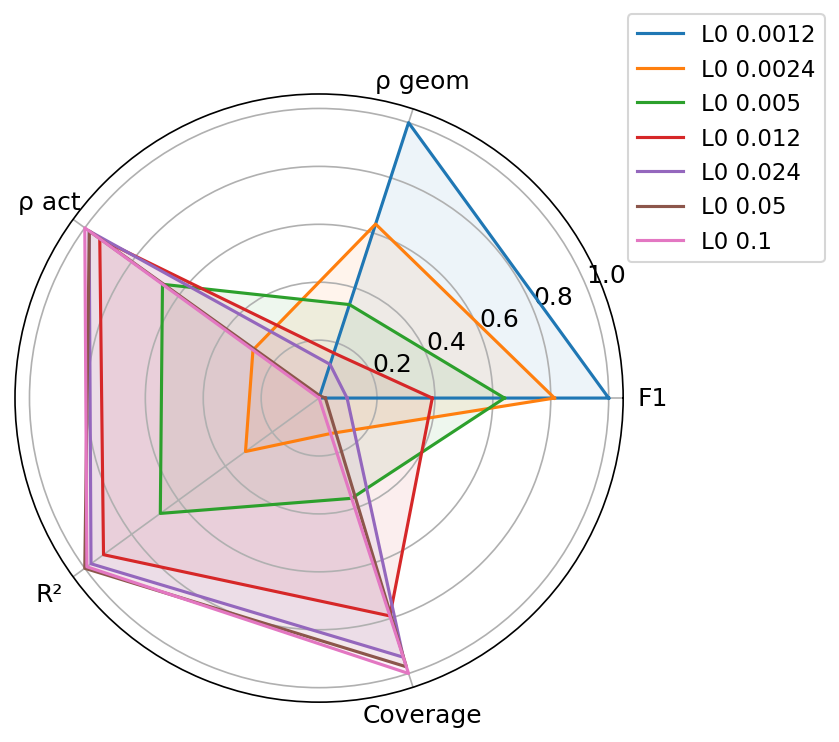} & 
  \includegraphics[width=0.25\linewidth]{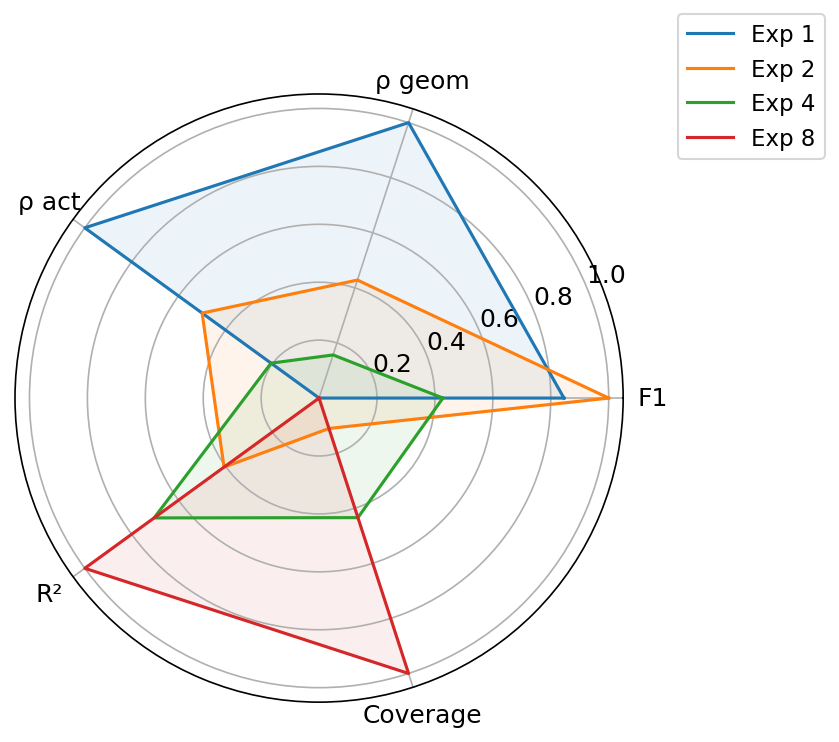}    &
  \includegraphics[width=0.25\linewidth]{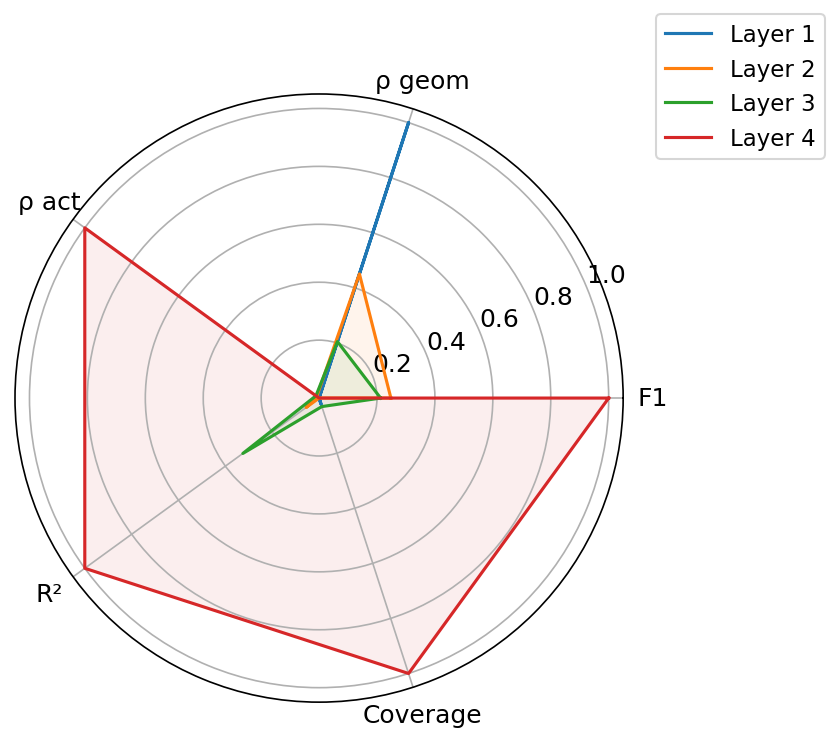} \\
  (a) Sparsity target   &   (b) Expansion factor &   (c) Features level 
\end{tabular}
\end{center}
\caption{
Radar plot illustrating the influence of the \textbf{(a) sparsity target }, \textbf{(b) expansion factor}, and \textbf{(c) features level}  on \texttt{TopK} Sparse Autoencoder (SAE) performance for both CUB and ImageNet datasets. 
Each axis corresponds to a normalized metric. 
}
\label{fig:results}
\end{figure*}
\vspace{-2mm}
\textbf{A sparsity sweet spot.}
We systematically vary sparsity in \texttt{TopK} SAEs via the target $\ell_0$ parameter, which controls the proportion of active units per sample. Results across CUB and ImageNet with multiple backbones (Appendix \ref{sec:experimentsSparsity}, ~\cref{fig:results}(a)) reveal a consistent pattern: as sparsity decreases—permitting more active dictionary atoms—coverage, $R^2$, and F1 improve substantially, indicating that denser codes better reconstruct the CBM concept space. Geometric alignment ($\rho_{\text{geom}}$), however, exhibits the opposite trend, declining as representations become less concentrated. This inverse relationship suggests a potential tension: aggressive sparsity produces plausible, localized prototypes at the expense of completeness, while relaxed sparsity enhances concept recovery but disperses structure across more atoms. Empirically, intermediate sparsity regimes ($\approx$ 0.01–0.05\% of sparsity) achieve a favorable balance, maintaining reasonable geometric fidelity while attaining high coverage. Once again, these findings indicate that sparsity is a substantive design parameter governing the plausibility of the found dictionary.

\textbf{Expansion factor sweet spot.}
We examine the influence of latent expansion factor (the ratio of dictionary size to input dimensionality) on \texttt{TopK} SAE performance. Increasing this factor enlarges representational capacity, permitting the model to discover more distinct concept directions. Results across CUB and ImageNet (Appendix \ref{sec:experimentsExpansionfactor}, ~\cref{fig:results}(b)) demonstrate that larger latent spaces systematically improve coverage, $R^2$, though geometric alignment ($\rho_{\text{geom}}$), activation correlation ($\rho_{\text{act}}$) and F1 decline modestly as representations distribute across more atoms. A sweet spot emerges at approximately $2\times$ expansion.

Beyond this threshold, further expansion yields diminishing returns: coverage saturates while directional alignment continues to diffuse. These findings suggest that expansion factor, like sparsity, governs a tradeoff between expressive capacity and interpretive concentration, with intermediate regimes offering the most favorable balance for recovering plausible concepts.

\vspace{-2mm}
\textbf{Layer-wise evaluation of \texttt{topK} SAEs.}
We investigate how the semantic alignment between the SAE and the CBM evolves across different stages of the ResNet-50 backbone. Specifically, we extract features \textit{after the pooling operation of each residual block} (i.e., after the spatial average pooling of stages 1–4) and train a separate \texttt{TopK} SAE on each set of features. This setup allows us to assess at which depth the backbone’s representation best matches the CBM concept space. As shown in Appendix \ref{sec:experimentsLayer_wise} and ~\cref{fig:results}(c), lower stages (Layers 1–2) capture mostly low-level visual attributes such as edges, colors, and textures, while deeper stages (Layers 3–4) encode increasingly abstract and semantically structured information.
Results reveal a clear depth-dependent trend. On CUB, deeper layers (particularly Layers 3 and 4) yield the strongest alignment, as indicated by higher $R^2$, coverage, and F1 values. Layer 4 achieves the best results in terms of predictability, strong concept coverage, and higher F1. Conversely, earlier layers, despite showing higher geometric correlation, primarily represent local patterns that do not align with the CBM’s semantic dimensions.  For ImageNet, the same tendency holds. Layer 4 again achieves the highest $R^2$ and coverage, confirming that post-pooling representations in the later blocks of ResNet-50 encode higher-level, concept-like features consistent with CBM semantics.

\vspace{-2mm}
\section{Conclusion}
\vspace{-2mm}
This paper introduced concept cones as a geometric framework for unifying supervised and unsupervised concept learning. By showing that both Concept Bottleneck Models and Sparse Autoencoders can be viewed as learning dictionaries whose nonnegative combinations span a cone in activation space, we provide a common language for comparing human-supervised and data-driven concept representations. Within this framework, CBMs act as partial semantic anchors: they do not define a complete ground truth for interpretability, but they provide a useful reference for testing whether unsupervised SAE dictionaries recover human-plausible directions.
Our experiments show that this geometric view leads to practical evaluation tools. The proposed metrics—coverage, geometric alignment, activation alignment, regression predictability, and sample-level agreement—capture complementary aspects of SAE–CBM correspondence, and the results confirm that no single metric is sufficient.
Our experiments reveal that intermediate layers of pretrained backbones, particularly those following block-level pooling in ResNet architectures, exhibit the strongest semantic alignment with concept-level representations. Moderate sparsity and expanded latent capacity enable SAEs to uncover structured, semantically meaningful concept dictionaries that parallel CBM-defined semantic dimensions.  Together, these findings demonstrate that interpretability can emerge from the interplay between supervision and sparsity, offering both control and scalability. By establishing a unified framework for evaluating semantic alignment across supervised and unsupervised settings, this work provides a foundation for a more systematic, operational understanding of how deep networks internalize human-aligned structure. Ultimately, our results suggest that the path toward truly interpretable foundation models lies not in choosing between supervision or discovery, but in harmonizing the two within a common geometric and statistical framework.

\paragraph{Limitations.} This work has limitations. Firstly, CBMs should not be interpreted as complete ground-truth representations. Human-provided concepts are biased, depending on the concept vocabulary and the limited annotations and the alignment with a CBM should be seen as an agreement with a particular semantic reference. Secondly, the proposed metrics are diagnostic and should not be taken as standalone metrics. Results should therefore be interpreted through the behaviors of all the metrics. 

\paragraph{Impact statement.}
This work aims to advance explainable AI and the understanding of deep neural networks by linking SAEs and CBMs. There are many potential societal consequences of our work, none of which we feel must be specifically highlighted here.
\clearpage

\textbf{Acknowledgments:} 
The authors thank Demba Ba for many fruitful discussions. 
This work was granted access to the HPC resources of IDRIS under the allocation 2024-AD011011970R4. 
This work has been made possible in part by a gift from the Chan Zuckerberg Initiative Foundation to establish the Kempner Institute for the Study of Natural and Artificial Intelligence at Harvard University. T.F. is supported by the Kempner Institute Research Fellowship.

\bibliographystyle{plainnat}
\bibliography{main}

\newpage
\appendix
\onecolumn

\clearpage
\setcounter{page}{1}

\begin{center}
    {\huge \textbf{A Geometric Unification of Concept Learning with Concept Cones}} \\  {\Large Supplementary Material}
\end{center}

\renewcommand{\thetable}{A.\arabic{table}}
\renewcommand{\thefigure}{A.\arabic{figure}}

\section{Detailed CBM variant descriptions}
\label{app:cbm_variants}

In this section, we provide comprehensive descriptions of each CBM variant presented in Equation~\ref{eq:cbm_constraints}, clarifying their specific architectures, training procedures, and key distinctions.

\subsection{Joint and sequential CBM}
\label{app:cbm_joint_seq}

\paragraph{Joint CBM \citep{koh2020concept}.}
The original CBM formulation where both the concept prediction layer \(\mathbf{W}_c\) and the task prediction layer \(\mathbf{W}_y\) are trained simultaneously with \(\lambda > 0\).
The model minimizes both concept prediction loss (via binary cross-entropy on labeled concepts \(\bm{c}_i\)) and task prediction loss (via cross-entropy on labels \(\bm{y}_i\)) jointly.
This requires a dataset with both concept annotations and task labels.
The joint training allows the model to learn concepts that are both semantically meaningful and maximally useful for the downstream task, but may lead to concepts that are less faithful to their intended human interpretation. 

\paragraph{Sequential CBM \citep{koh2020concept}.}
A two-stage training procedure where the concept layer \(\mathbf{W}_c\) is first trained to predict concept labels (stage 1), then frozen, and the task prediction layer \(\mathbf{W}_y\) is trained on top of the predicted concepts (stage 2).
This approach enforces a stricter bottleneck since the task predictor can only access information through the learned concepts, potentially improving interpretability at the cost of some task performance.
Sequential training ensures concepts remain faithful to their annotations, as they are optimized solely for concept prediction in the first stage.

\paragraph{Independent CBM \citep{koh2020concept}.}
Concept and task predictors are trained completely independently: \(\mathbf{W}_c\) is trained to predict concepts, and \(\mathbf{W}_y\) is trained directly on the original features \(\a_i\) to predict tasks.
At inference time, the independently trained components are composed: predictions flow through \(\mathbf{W}_c\) to produce concepts, which are then fed to \(\mathbf{W}_y\).
This variant is primarily used for analysis and comparison rather than practical deployment.

\subsection{Label-Free CBM (LF-CBM)}
\label{app:cbm_lf}

\paragraph{Method \citep{oikarinen2023label}.}
Label-Free CBM eliminates the need for concept annotations (\(\lambda = 0\)) by leveraging large language models (LLMs) to generate task-relevant concepts.
Given class names, GPT-3 is prompted to generate a set of discriminative concepts for each class (e.g., "red feathers," "curved beak" for bird species).
These textual concepts are embedded using CLIP's text encoder to obtain \(\D_{\texttt{LLM}} \in \mathbb{R}^{c \times d}\).
The concept prediction layer is then defined as \(\mathbf{W}_c = \D_{\texttt{LLM}}\), and the model learns to align image features to these text-derived concept directions.

\paragraph{Architecture details.}

For an input image \(\x_i\), the model computes:
\begin{align}
\a_i &= \f(\x_i) \quad \text{(CLIP image features)} \\
\hat{\bm{c}}_i &= \sigma(\a_i \D_{\texttt{LLM}}^\top + \bm{b}_c) \quad \text{(concept predictions)} \\
\hat{\bm{y}}_i &= \mathrm{softmax}(\hat{\bm{c}}_i \mathbf{W}_y^\top + \bm{b}_y) \quad \text{(class predictions)}
\end{align}

The model is trained end-to-end with only task labels, learning to activate the "correct" LLM-generated concepts for classification.

\paragraph{Limitations.}
There is no guarantee that LLM-generated concepts are actually detectable in the learned features \(\f\), potentially leading to unfaithful explanations where concept activations do not correspond to their semantic meaning \citep{margeloiu2021concept,roth2023waffling}.

\subsection{Post-hoc CBM}
\label{app:cbm_posthoc}

\paragraph{Method \citep{yuksekgonul2022post}.}
Post-hoc CBM constructs concept bottlenecks from already-trained neural networks without retraining.
Given a trained feature extractor \(\f\) and a set of labeled concept examples, the method learns a linear concept bank \(\D_{\texttt{Bank}}\) that decomposes feature activations into interpretable directions.

\paragraph{Concept bank construction.}
Several approaches can be used:
\begin{itemize}
    \item \textbf{Supervised projection}: For each concept, collect positive and negative examples, compute their mean features, and define the concept direction as the normalized difference.
    \item \textbf{Matrix factorization}: Apply PCA, NMF, or ICA to activation matrices to discover latent directions, then label them post-hoc.
    \item \textbf{Probe training}: Train linear probes on features to predict binary concept labels, using probe weights as concept directions.
\end{itemize}

\paragraph{Architecture with residual.}
A key variant of Post-hoc CBM includes a residual connection to preserve information:
\begin{align}
\hat{\bm{c}}_i &= \sigma(\a_i \D_{\texttt{Bank}}^\top + \bm{b}_c) \\
\hat{\bm{y}}_i &= \mathrm{softmax}([\hat{\bm{c}}_i \;|\; \a_i] \mathbf{W}_y^\top + \bm{b}_y)
\end{align}
where \([\hat{\bm{c}}_i \;|\; \a_i]\) denotes concatenation.
The residual connection allows the model to maintain original task performance while providing concept-based explanations, at the cost of reduced faithfulness to the bottleneck constraint.

\subsection{DN-CBM (Discover-then-Name CBM)}
\label{app:cbm_dn}

\paragraph{Method \citep{rao2024discover}.}
DN-CBM inverts the traditional CBM paradigm by first discovering concepts the model has learned, then naming them, and finally using them for classification.
The method uses Sparse Autoencoders (SAEs) to decompose CLIP features into interpretable concept directions in an unsupervised, task-agnostic manner.

\paragraph{Concept discovery via SAE.}
An SAE is trained to reconstruct CLIP features with sparsity constraints:
\begin{align}
\mathcal{L}_{\text{SAE}}(\a) = \|\text{SAE}(\a) - \a\|_2^2 + \lambda_1 \|\phi(f(\a))\|_1
\end{align}
where \(f(\a) = \mathbf{W}_E^\top \a\) is the encoder, \(\phi\) is ReLU, and \(g(\cdot) = \mathbf{W}_D^\top \phi(f(\a))\) is the decoder.
The dictionary \(\D_{\texttt{SAE}} = \mathbf{W}_D \in \mathbb{R}^{c \times d}\) contains learned concept directions as rows.

\paragraph{Automated concept naming.}
Each SAE neuron (dictionary atom) is assigned a name by finding the closest text embedding in CLIP space:
\begin{align}
s_c = \argmax_{v \in \mathcal{V}} \cos(\vec{p}_c, \text{CLIP}_{\text{text}}(v))
\end{align}
where \(\vec{p}_c = [\mathbf{W}_D]_c\) is the dictionary vector and \(\mathcal{V}\) is a vocabulary (e.g., 20k common English words).

\paragraph{Task-agnostic bottleneck.}
The SAE is trained once on a large unlabeled dataset (e.g., CC3M), then frozen and used as a concept bottleneck for multiple downstream tasks:
\begin{align}
\hat{\bm{c}}_i &= \phi(\a_i \D_{\texttt{SAE}}^\top) \quad \text{(sparse concept activations)} \\
\hat{\bm{y}}_i &= \mathrm{softmax}(\hat{\bm{c}}_i \mathbf{W}_y^\top + \bm{b}_y) \quad \text{(task-specific linear classifier)}
\end{align}

Only the linear classifier \(\mathbf{W}_y\) is trained per task, making DN-CBM highly efficient and scalable.

\subsection{SpLiCE (Sparse Linear Concept Embeddings)}
\label{app:cbm_splice}

\paragraph{Method \citep{bhalla2024interpreting}.}
SpLiCE interprets CLIP models by decomposing image features into sparse linear combinations of predefined text concept embeddings.
Unlike DN-CBM which learns the dictionary, SpLiCE uses a fixed dictionary \(\D_{\texttt{CLIP}}^\top\) consisting of CLIP text embeddings of a concept vocabulary.

\paragraph{Architecture.}
For each image, SpLiCE solves an optimization problem to find sparse concept coefficients:
\begin{align}
\hat{\bm{c}}_i = \argmin_{\bm{c}} \|\a_i - \D_{\texttt{CLIP}}^\top \bm{c}\|_2^2 + \lambda \|\bm{c}\|_1
\end{align}
where \(\a_i = \text{CLIP}_{\text{image}}(\x_i)\) and \(\D_{\texttt{CLIP}}^\top \in \mathbb{R}^{d \times c}\) contains CLIP text embeddings as columns.

\paragraph{Key distinction.}
SpLiCE is primarily an \emph{interpretation} method rather than a classification model.
The sparse codes \(\hat{\bm{c}}_i\) explain what concepts CLIP "sees" in an image, but are not used to train a downstream classifier.
The method is post-hoc and does not require retraining the feature extractor.

\subsection{Stochastic CBM}
\label{app:cbm_stochastic}

\paragraph{Method \citep{vandenhirtz2024stochastic}.}
Stochastic CBM (also called Probabilistic CBM \citep{kim2023probabilistic}) replaces deterministic concept predictions with distributions to capture uncertainty.

\paragraph{Architecture.}
Instead of directly predicting concept activations, the model learns parameters of a Gaussian distribution:
\begin{align}
\bm{\mu}(\a_i) &= \mathbf{W}_{\mu}^\top \a_i + \bm{b}_{\mu} \\
\bm{\Sigma}(\a_i) &= \text{softplus}(\mathbf{W}_{\Sigma}^\top \a_i + \bm{b}_{\Sigma}) \\
\bm{\eta}_i &\sim \mathcal{N}(\bm{\mu}(\a_i), \text{diag}(\bm{\Sigma}(\a_i))) \\
\hat{\bm{c}}_i &= \sigma(\bm{\eta}_i)
\end{align}

During training, samples are drawn from \(\mathcal{N}(\bm{\mu}, \bm{\Sigma})\) and passed through the sigmoid to produce concept probabilities.
At inference, the mean \(\bm{\mu}(\a_i)\) can be used, or multiple samples can be drawn to quantify prediction uncertainty.

\paragraph{Concept directions.}
The concept directions are encoded in \(\mathbf{W}_{\mu}\), which can be viewed as \(\mathbf{W}_c\) in the deterministic case.
The stochasticity allows the model to express when it is uncertain about a concept's presence, improving calibration and intervention efficacy.

\subsection{CLIP-QDA}
\label{app:cbm_clipqda}

\paragraph{Method \citep{kazmierczak2023clip}.}
CLIP-QDA constructs an interpretable concept bottleneck using Quadratic Discriminant Analysis (QDA) in a concept space derived from CLIP text embeddings.

\paragraph{Concept space construction.}
\begin{enumerate}
    \item Start with a vocabulary \(\mathcal{V}\) of concept words (e.g., 20k common English words)
    \item Embed each word using CLIP text encoder: \(t_i = \text{CLIP}_{\text{text}}(v_i)\)
    \item Apply PCA to reduce dimensionality: \(\D_{\texttt{CLIP}} = \text{PCA}(\{t_1, \ldots, t_{|\mathcal{V}|}\}) \in \mathbb{R}^{c \times d}\)
    \item For an image, compute concept representation: \(\hat{\bm{c}}_i = \a_i^\top \D_{\texttt{CLIP}}^\top\)
\end{enumerate}

\paragraph{Quadratic discriminant analysis.}
Instead of learning a linear classifier \(\mathbf{W}_y\), CLIP-QDA learns class-specific Gaussian distributions:
\begin{align}
p(\hat{\bm{c}} \mid y) &= \mathcal{N}(\bm{\mu}_y, \bm{\Sigma}_y) \\
\hat{y}_i &= \argmax_{y} \left[ \log p(\hat{\bm{c}}_i \mid y) + \log p(y) \right]
\end{align}

This results in quadratic decision boundaries in concept space, which can model more complex class distributions than linear CBMs.

\section{Concept cones: a unifying geometric framework.}
\label{app:proof_cone_unification}

\textbf{Concept recovery formulation.}\begin{definition}[Concept as inversion of latent factors]
\label{def:concept}
A concept $c_k$ is a coordinate of a measurable right-inverse \(\bm{H}:\SA\to\mathcal{C}\) of the data-to-feature map \(\f\circ\dgp\).
Formally, \(\bm{H}\circ\f\circ\dgp(\bm{c}) \approx \bm{c}\) in expectation under \(P_{\bm{c}}\).
Equivalently, the \(k\)-th concept is \(\bm{H}_k(\a)\), which estimates the latent factor \(c_k\) from an activation \(\a\in\SA\).
\end{definition}

With Definition~\ref{def:concept} in hand, concept recovery is the problem of inverting the map \(\f\circ\dgp\) on its image. 
Concretely, one seeks a measurable right-inverse \(\bm{H}:\SA\to\mathcal{C}\) so that \(\bm{H}\circ\f\circ\dgp \approx \mathrm{Id}_{\mathcal{C}}\) in expectation. 
Because \(\f\) is generally nonlinear and high-dimensional, we work in a linear chart of \(\SA\) and approximate inversion by learning a low-dimensional coordinate system for activations. 
This leads naturally to concept extraction as \emph{dictionary learning} formulation~\citep{klindt2023identifying,klindt2020towards,fel2023holistic} that we recall now.

\paragraph{Projection, geometry, and recoverability in concept cones.}We provide a detailed proof of Proposition~\ref{prop:concept_cone_unification}, which establishes that concept cones provide a unifying geometric framework for both supervised (CBM) and unsupervised (SAE, CRAFT, etc.) concept extraction methods.

We recall our setup with our activation space: \(\SA \subset \mathbb{R}^d\), with activations \(\a_i = \f(\x_i)\), the concept dictionary: \(\D \in \mathbb{R}^{c \times d}\) (rows are concept directions) and the associated concept codes: \(\bm{z}_i \in \mathbb{R}^c\) (unsupervised) or \(\hat{\bm{c}}_i \in [0,1]^c\) (supervised). We denote the concept cone: \(\mathcal{C}_{\D} = \{\bm{v} \in \mathbb{R}^d : \bm{v} = \bm{\alpha}^\top \D, \bm{\alpha} \in \mathbb{R}_+^c\}\).
For unsupervised methods (Equation~\ref{eq:dico_constraints}), the objective is:
\begin{equation}
\min_{\bm{Z}, \D} \|\A - \bm{Z}\D\|_F^2 + \Omega(\bm{Z}, \D)
\label{eq:unsup_obj}
\end{equation}
where \(\Omega(\cdot)\) encodes constraints (sparsity, nonnegativity, orthogonality, etc.), see \cref{eq:dico_constraints} or details.

For supervised methods (\cref{eq:cbm_constraints}), the objective is:
\begin{equation}
\min_{\mathbf{W}_c, \mathbf{W}_y} \frac{1}{n}\sum_{i=1}^n \mathcal{L}_{\mathrm{CE}}(\hat{\bm{y}}_i, \bm{y}_i) + \lambda \mathcal{L}_{\mathrm{BCE}}(\hat{\bm{c}}_i, \bm{c}_i)
\label{eq:sup_obj}
\end{equation}
where \(\hat{\bm{c}}_i = \sigma(\a_i \mathbf{W}_c^\top + \bm{b}_c)\) and \(\hat{\bm{y}}_i = \mathrm{softmax}(\hat{\bm{c}}_i \mathbf{W}_y^\top + \bm{b}_y)\). See \cref{eq:cbm_constraints} and \cref{app:cbm_variants} for details. We recall our claim: 

\begin{observation}[Concept Cones Unify Extraction Methods]
Let \(\D \in \mathbb{R}^{c \times d}\) be a concept dictionary learned by any method in ~\cref{eq:dico_constraints} or \cref{eq:cbm_constraints}. Then:
\begin{enumerate}[label=(\roman*)]
    \item \textbf{Projection operator:} Both supervised and unsupervised methods implicitly learn a projection operator \(\bm{\Pi}_{\mathcal{C}_{\D}}: \mathbb{R}^d \to \mathcal{C}_{\D}\) that maps activations onto the concept cone via \(\bm{\Pi}_{\mathcal{C}_{\D}}(\a) = \hat{\bm{c}}^\top \D\) where \(\hat{\bm{c}} \geq 0\).
    \item \textbf{Shared geometric structure:} Supervised methods (CBMs) and unsupervised methods optimize for different objectives—task performance vs. reconstruction—but both search for a convex cone \(\mathcal{C}_{\D}\) in activation space that captures the relevant subspace of \(\SA\).
    \item \textbf{Concept recoverability:} A concept direction \(\bm{v} \in \mathbb{R}^d\) learned by one method is recoverable from dictionary \(\D\) learned by another if and only if \(\bm{v} \in \mathcal{C}_{\D}\), i.e., there exists \(\bm{\alpha} \in \mathbb{R}_+^c\) such that \(\bm{v} = \bm{\alpha}^\top \D\). Agreement between methods is thus quantified by their cone overlap.
\end{enumerate}
\end{observation}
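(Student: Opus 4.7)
The three claims all follow from a single algebraic fact: every method listed in \cref{eq:dico_constraints,eq:cbm_constraints} produces an intermediate code vector that is coordinatewise nonnegative and reconstructs activations via a linear map through the dictionary. Once this is verified uniformly, parts (i) and (iii) reduce to applying the definition of the conic hull in \cref{eq:concept_cone}, and (ii) is a consequence of the fact that the image of $\mathbb{R}_+^c$ under $\bm{\alpha}\mapsto\bm{\alpha}^\top\D$ is a convex cone. So the plan is to verify nonnegativity case by case, then invoke the definition.

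\paragraph{Step 1: Nonnegativity of codes.}
For the unsupervised family, I would go through each line of \cref{eq:dico_constraints}: ACE picks rows from the canonical basis $\{\e_1,\dots,\e_k\}\subset\mathbb{R}_+^c$; CRAFT and Archetypal Analysis impose $\Z\geq 0$ explicitly; SAEs produce $\Z=\bm{\Psi}_\theta(\A)$ where the encoder's ReLU/TopK/BatchTopK/JumpReLU activation guarantees $\Z\geq 0$. (ICE is the exception, since orthogonality $\D\D^\top=\mathbf{I}$ does not imply $\Z\geq 0$; I would either exclude it or restate the observation over the nonnegative subfamily.) For the supervised family, every variant in \cref{eq:cbm_constraints} computes $\hat{\bm{c}}=\sigma(\a\mathbf{W}_c^\top+\bm{b}_c)\in[0,1]^c\subset\mathbb{R}_+^c$, either from a learned bottleneck or from a precomputed dictionary $\D_{\texttt{LLM}}, \D_{\texttt{Bank}}, \D_{\texttt{SAE}}, \D_{\texttt{Clip}}$. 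In all cases the code satisfies $\hat{\bm{c}}\geq 0$.

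\paragraph{Step 2: Projection (i) and shared geometry (ii).}
Given any $\a\in\SA$, define $\bm{\Pi}_{\mathcal{C}_{\D}}(\a) = \hat{\bm{c}}(\a)^\top\D$, where $\hat{\bm{c}}(\a)$ is the code produced by the method's encoder. By Step~1, $\hat{\bm{c}}(\a)\in\mathbb{R}_+^c$, so by \cref{eq:concept_cone}, $\bm{\Pi}_{\mathcal{C}_{\D}}(\a)\in\mathcal{C}_{\D}$, establishing (i). For (ii), note that although the unsupervised objective minimizes $\|\A-\Z\D\|_F^2$ while the CBM objective minimizes a classification loss regularized by concept supervision, both parameterize the set of representable reconstructions as $\{\bm{\alpha}^\top\D : \bm{\alpha}\in\mathbb{R}_+^c\}=\mathcal{C}_{\D}$. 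Convexity of the cone follows from convexity of $\mathbb{R}_+^c$ and linearity of $\bm{\alpha}\mapsto\bm{\alpha}^\top\D$; thus both paradigms are differently-regularized searches over pairs $(\D,\hat{\bm{c}})$ that jointly select a convex cone in $\mathbb{R}^d$.

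\paragraph{Step 3: Recoverability (iii) and the main obstacle.}
Claim (iii) is essentially tautological given \cref{eq:concept_cone}: $\bm{v}\in\mathcal{C}_{\D}$ iff there exists $\bm{\alpha}\in\mathbb{R}_+^c$ with $\bm{v}=\bm{\alpha}^\top\D$, which is exactly the statement that $\bm{v}$ is reproducible as a valid reconstruction of the method using $\D$. Overlap of two cones $\mathcal{C}_{\D_1}$ and $\mathcal{C}_{\D_2}$ then measures mutual recoverability and motivates the metrics of \cref{sec:cone_metrics}. The main obstacle is not technical but conceptual: (a) ICE violates Step~1, so the statement must either be restricted or weakened to "span" rather than "cone"; (b) the bias terms $\bm{b}_c$ in CBMs and $\bm{b}$ in SAE decoders shift $\mathcal{C}_{\D}$ to an affine cone, which should either be absorbed into $\D$ by appending a constant atom or acknowledged in a remark. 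I would flag these two caveats at the end of the proof so that the unifying picture remains crisp while being honest about its scope.
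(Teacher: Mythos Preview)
Your proposal is correct and follows essentially the same route as the paper's proof in the appendix: verify nonnegativity of the codes case by case, conclude that reconstructions land in $\mathcal{C}_{\D}$ for part~(i), observe that both paradigms parameterize reconstructions through the same conic hull under different objectives for part~(ii), and read part~(iii) directly off \cref{eq:concept_cone}. Your explicit flagging of the ICE exception and the bias-term shift are caveats the paper glosses over (it silently restricts to methods ``that enforce $\Z\geq 0$''), so your version is in fact slightly more careful.
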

\begin{proof}

\textbf{(i): Projection operators}

We first show that, for any unsupervised method in Equation~\ref{eq:dico_constraints} that enforces \(\bm{Z} \geq 0\) (e.g., SAE with ReLU, CRAFT with nonnegativity, NMF), the reconstruction \(\hat{\a}_i = \bm{z}_i^\top \D\) lies in the concept cone \(\mathcal{C}_{\D}\).
    
By definition, \(\hat{\a}_i = \bm{z}_i^\top \D = \sum_{j=1}^c z_{ij} \D_j\) where \(\D_j\) is the \(j\)-th row of \(\D\).
Since the constraint ensures \(\bm{z}_i \geq 0\) (either explicitly via \(\bm{Z} \geq 0\) in CRAFT/NMF, or implicitly via \(\phi(\cdot) = \mathrm{ReLU}(\cdot)\) in SAEs), we have \(z_{ij} \geq 0\) for all \(j\).
Therefore, \(\hat{\a}_i\) is a nonnegative linear combination of the rows of \(\D\), which by definition means \(\hat{\a}_i \in \mathcal{C}_{\D}\).
Thus, the mapping \(\a_i \mapsto \hat{\a}_i\) defines a (possibly approximate) projection \(\Pi_{\mathcal{C}_{\D}}: \mathbb{R}^d \to \mathcal{C}_{\D}\).

Similarly, we now show that for CBMs (Equation~\ref{eq:cbm_constraints}), the bottleneck representation \(\hat{\a}_i = \hat{\bm{c}}_i^\top \mathbf{W}_c\) lies in the concept cone \(\mathcal{C}_{\mathbf{W}_c}\).
Since in a CBM, concept predictions are \(\hat{\bm{c}}_i = \sigma(\a_i \mathbf{W}_c^\top + \bm{b}_c)\) where \(\sigma\) is the sigmoid (or other activation ensuring \([0,1]\) output).
By definition of sigmoid, \(\hat{c}_{ij} \in [0,1] \subset \mathbb{R}_+\) for all \(j\).
The bottleneck representation that is fed to the classifier is:
\begin{equation}
\hat{\a}_i = \hat{\bm{c}}_i^\top \mathbf{W}_c = \sum_{j=1}^c \hat{c}_{ij} (\mathbf{W}_c)_j
\end{equation}
where \((\mathbf{W}_c)_j\) is the \(j\)-th row of \(\mathbf{W}_c\).
Since \(\hat{c}_{ij} \geq 0\), this is a nonnegative linear combination of concept directions, hence \(\hat{\a}_i \in \mathcal{C}_{\mathbf{W}_c}\).
The mapping \(\a_i \mapsto \hat{\a}_i\) defines a projection (in the sense of dimension reduction through the bottleneck) onto the concept cone.

Both paradigms produce representations that lie in concept cones, defining implicit projection operators \(\Pi_{\mathcal{C}_{\D}}\).

\textbf{(ii): Shared geometric structure}

We start with unsupervised method and show that they select a cone \(\mathcal{C}_{\D}\) to minimize reconstruction error: \(\min_{\D} \mathbb{E}_{\a \sim P_{\A}}[\|\a - \Pi_{\mathcal{C}_{\D}}(\a)\|^2]\) subject to sparsity/structure constraints on codes.
The objective in Equation~\ref{eq:unsup_obj} can be written per-sample as:
\begin{equation}
\mathcal{L}_{\text{unsup}}(\a_i) = \min_{\bm{z}_i \geq 0} \|\a_i - \bm{z}_i^\top \D\|^2 + \Omega(\bm{z}_i)
\end{equation}
The optimal code \(\bm{z}_i^*\) produces the reconstruction \(\hat{\a}_i = \bm{z}_i^{*\top} \D \in \mathcal{C}_{\D}\).
Since \(\bm{z}_i^* = \argmin_{\bm{z} \geq 0} \|\a_i - \bm{z}^\top \D\|^2 + \Omega(\bm{z})\), the reconstruction \(\hat{\a}_i\) is the best approximation of \(\a_i\) within the cone \(\mathcal{C}_{\D}\) under the given regularization.
Optimizing over \(\D\) thus amounts to selecting the cone that minimizes expected reconstruction error over the data distribution.

We now turn to unsupervised methods and show that CBMs select a cone \(\mathcal{C}_{\mathbf{W}_c}\) to maximize task performance: \(\min_{\mathbf{W}_c} \mathbb{E}_{(\a, y) \sim P}[\mathcal{L}_{\mathrm{CE}}(h(\Pi_{\mathcal{C}_{\mathbf{W}_c}}(\a)), y)]\) where \(h\) is the task classifier.
We note that the CBM objective in Equation~\ref{eq:sup_obj} can be decomposed as:
\begin{align}
\mathcal{L}_{\text{CBM}} &= \mathbb{E}_{(\a, y)}[\mathcal{L}_{\mathrm{CE}}(\mathrm{softmax}(\hat{\bm{c}}^\top \mathbf{W}_y), y)] + \lambda \mathbb{E}_{(\a, \bm{c})}[\mathcal{L}_{\mathrm{BCE}}(\hat{\bm{c}}, \bm{c})] \\
&= \mathbb{E}_{(\a, y)}[\mathcal{L}_{\mathrm{CE}}(h(\hat{\bm{c}}^\top \mathbf{W}_c), y)] + \lambda \mathbb{E}[\mathcal{L}_{\mathrm{BCE}}(\hat{\bm{c}}, \bm{c})]
\end{align}
where \(\hat{\bm{c}}^\top \mathbf{W}_c \in \mathcal{C}_{\mathbf{W}_c}\) is the bottleneck representation.
The first term optimizes the cone \(\mathcal{C}_{\mathbf{W}_c}\) such that projections onto it preserve task-relevant information.
The second term (when \(\lambda > 0\)) biases the cone toward human-interpretable concept directions, but the primary driver is task loss.
Thus, supervised methods select the cone that best supports downstream classification, as opposed to reconstruction.

Thus, both paradigms optimize for a convex cone \(\mathcal{C}_{\D}\) in activation space, but under different criteria: unsupervised methods prioritize reconstruction fidelity, while supervised methods prioritize task discrimination.
The geometric structure—a cone generated by nonnegative combinations of concept directions—is identical, only the selection criterion differs.

\textbf{(iii): Concept recoverability}
We now show that a concept direction \(\bm{w} \in \mathbb{R}^d\) is recoverable from dictionary \(\D\) if and only if \(\bm{w} \in \mathcal{C}_{\D}\).
If \(\bm{w}\) is recoverable, then \(\bm{w} \in \mathcal{C}_{\D}\):
Suppose \(\bm{w}\) is recoverable from \(\D\), meaning there exists a code \(\bm{\alpha} \geq 0\) such that \(\bm{w} = \bm{\alpha}^\top \D\). By definition of the concept cone (Equation~\ref{eq:concept_cone}), this implies \(\bm{w} \in \mathcal{C}_{\D}\).
Similarly, if \(\bm{w} \in \mathcal{C}_{\D}\), then \(\bm{w}\) is recoverable.
We note the contrapositive: if \(\bm{w} \notin \mathcal{C}_{\D}\), then no nonnegative combination of \(\D\)'s rows can produce \(\bm{w}\), hence \(\bm{w}\) is not recoverable from \(\D\).

\end{proof}

This geometric view makes precise the intuition that both supervised and unsupervised methods "carve out" a subspace of activation space, constrained to be a convex cone due to nonnegativity, and differing only in \emph{which} cone they select based on their optimization objective.

\section{Theoretical insights on the metrics}
\label{sec:metric_theory}

The metrics introduced in the paper are designed to capture complementary aspects of the relationship between CBM and SAE concept spaces.  
While the cone formalism gives a unified geometric language, it also reveals that different notions of agreement should not be conflated.  
In particular, cone containment, sparse recoverability, atom-level alignment, activation-level agreement, and regression predictability measure distinct properties.  
We now formalize these distinctions through a series of simple propositions.

Let \(\D_{\mathrm{rand}}\) denote a random SAE dictionary.

\begin{proposition}[Recoverability monotonicity under cone inclusion] \label{prop:RecoMono}
Assume that
\[
\mathcal{C}_{\D} \subseteq \mathcal{C}_{\D_{\mathrm{rand}}}.
\]
Then, for any concept direction \(\bm v_i \in \mathbb{R}^d\), if \(\bm v_i\) is recoverable from \(\D\), then \(\bm v_i\) is recoverable from \(\D_{\mathrm{rand}}\).
\end{proposition}

\begin{proof}
If \(\bm v_i\) is recoverable from \(\D\), then by Observation~\ref{prop:concept_cone_unification}(iii),
\[
\bm v_i \in \mathcal{C}_{\D}.
\]
Since \(\mathcal{C}_{\D} \subseteq \mathcal{C}_{\D_{\mathrm{rand}}}\), it follows that
\[
\bm v_i \in \mathcal{C}_{\D_{\mathrm{rand}}}.
\]
Again by Observation~\ref{prop:concept_cone_unification}(iii), \(\bm v_i\) is recoverable from \(\D_{\mathrm{rand}}\).
\end{proof}

The first result shows that a random SAE would have a better coverage than a trained SAE. The hypotheses of this proposition are validated in Appendix \ref{subsec:ConeContainement}.  This justifies that a random SAE might have a higher coverage than a trained one.

The next proposition makes explicit why the coverage score is a natural measure of approximate containment.  
It expresses coverage as one minus the normalized residual energy of the CBM directions after projection onto the SAE cone.

\begin{proposition}[Coverage characterizes approximate cone containment]
Let \(\{\bm v_i\}_{i=1}^{c_{\mathrm{CBM}}}\) be CBM concept directions and let
\[
\bm{\alpha}_i^*
=
\argmin_{\bm{\alpha}\geq 0}
\|\bm v_i-\bm{\alpha}^{\top}\D_{\mathrm{SAE}}\|_2^2.
\]
Define
\[
\delta_i =
\frac{\|\bm v_i-\bm{\alpha}_i^{*\top}\D_{\mathrm{SAE}}\|_2}
{\|\bm v_i\|_2}.
\]
Then
\[
\mathrm{Cov}
=
1-
\frac{\sum_i \delta_i^2 \|\bm v_i\|_2^2}
{\sum_i \|\bm v_i\|_2^2},
\]
provided the projection residuals are orthogonal to the cone projection. In particular,
\(\mathrm{Cov}=1\) if and only if every CBM direction lies in
\(\mathcal{C}_{\D_{\mathrm{SAE}}}\).
\end{proposition}

\begin{proof}
Let
\[
\hat{\bm v}_i=\bm{\alpha}_i^{*\top}\D_{\mathrm{SAE}}
=
\Pi_{\mathcal{C}_{\D_{\mathrm{SAE}}}}(\bm v_i).
\]
For metric projections onto closed convex cones,
\[
\|\bm v_i\|_2^2
=
\|\hat{\bm v}_i\|_2^2
+
\|\bm v_i-\hat{\bm v}_i\|_2^2.
\]
Therefore,
\[
\|\hat{\bm v}_i\|_2^2
=
\|\bm v_i\|_2^2
-
\delta_i^2\|\bm v_i\|_2^2.
\]
Summing over \(i\) and dividing by \(\sum_i\|\bm v_i\|_2^2\) gives the result.
\end{proof}

However, containment alone is not enough to conclude that the two dictionaries discover the same axes.  
A CBM direction may be contained in the SAE cone because it is reconstructed by a combination of SAE atoms, even if no single SAE atom is aligned with that CBM direction.

\begin{proposition}[Containment does not imply axis alignment]
There exist dictionaries \(\D_1,\D_2\) such that
\[
\mathcal{C}_{\D_1}\subseteq \mathcal{C}_{\D_2},
\]
but no atom of \(\D_2\) is exactly aligned with an atom of \(\D_1\).
Consequently, high coverage does not necessarily imply high
\(\rho_{\mathrm{geom}}\).
\end{proposition}

\begin{proof}
Consider \(\mathbb{R}^2\) and let
\[
\D_1=\{(1,0)\}.
\]
For a small angle \(\theta>0\), define
\[
\D_2=\{(\cos\theta,\sin\theta),(\cos\theta,-\sin\theta)\}.
\]
Then
\[
(1,0)
=
\frac{1}{2\cos\theta}
(\cos\theta,\sin\theta)
+
\frac{1}{2\cos\theta}
(\cos\theta,-\sin\theta),
\]
so \((1,0)\in\mathcal{C}_{\D_2}\) and hence
\(\mathcal{C}_{\D_1}\subseteq\mathcal{C}_{\D_2}\).
However, neither atom of \(\D_2\) is exactly aligned with \((1,0)\) unless \(\theta=0\).
Thus containment can hold through combinations of atoms even when individual axes differ.
\end{proof}

This distinction is especially important for SAEs, since SAEs are trained to reconstruct broad activation structure, whereas CBMs focus on a smaller set of supervised, human-desirable concepts.  
Thus, we should not expect all SAE atoms to match CBM concepts.  
Instead, only a subset of SAE atoms may participate in the recovery of CBM directions, while the remaining atoms encode additional non-CBM factors.

\begin{proposition}[Partial semantic overlap]
Let the SAE dictionary decompose into two disjoint subsets
\[
\D_{\mathrm{SAE}}
=
\D_{\mathrm{CBM}} \cup \D_{\mathrm{extra}},
\]
where \(\D_{\mathrm{CBM}}\) contains atoms useful for reconstructing CBM concepts and
\(\D_{\mathrm{extra}}\) contains task-irrelevant or non-CBM factors needed for activation reconstruction.
Assume CBM concepts are recoverable from \(\D_{\mathrm{CBM}}\):
\[
\forall i,\qquad
\bm v_i\in \mathcal{C}_{\D_{\mathrm{CBM}}}.
\]
Then
\[
\mathrm{Cov}(\D_{\mathrm{SAE}})=1,
\]
even though many SAE atoms in \(\D_{\mathrm{extra}}\) may have low correlation with all CBM concepts.
\end{proposition}

\begin{proof}
Since
\[
\D_{\mathrm{CBM}}\subseteq \D_{\mathrm{SAE}},
\]
we have
\[
\mathcal{C}_{\D_{\mathrm{CBM}}}
\subseteq
\mathcal{C}_{\D_{\mathrm{SAE}}}.
\]
Because each \(\bm v_i\) lies in \(\mathcal{C}_{\D_{\mathrm{CBM}}}\), it also lies in
\(\mathcal{C}_{\D_{\mathrm{SAE}}}\).
Therefore every CBM direction is exactly recoverable from the SAE cone, so
\(\mathrm{Cov}(\D_{\mathrm{SAE}})=1\).
However, atoms in \(\D_{\mathrm{extra}}\) need not be correlated with any CBM direction, so atom-level alignment metrics averaged over all SAE atoms may be low.
\end{proof}

In practice, we often observe the contrary, this show that the $\D_{\mathrm{CBM}}$ might be bigger not reconstruct all the vectors of $\bm v_i$ of the CBM.
The previous propositions concern the geometry of the dictionaries themselves.  
However, CBMs and SAEs also induce latent representations over the dataset through their code matrices, \(\Z^{\mathrm{CBM}}\) and \(\Z^{\mathrm{SAE}}\).  
Dictionary-level agreement and activation-level agreement are not equivalent: two models may use similar directions in activation space but apply different encoders, biases, nonlinearities, thresholds, or sparsity operators. 
Therefore, high atom-level correlation,
\[
|\mathrm{corr}(\bm d_j,\bm w_i)|\approx 1,
\]
does not necessarily imply high activation correlation,
\[
|\mathrm{corr}(\Z^{\mathrm{SAE}}_{\cdot j},\Z^{\mathrm{CBM}}_{\cdot i})|\approx 1.
\]
This motivates \(\rho_{\mathrm{act}}\) and $R^2$, which measures whether geometrically related concepts also behave similarly across samples.

While $R^2$ measures the global predictability of CBM activations from SAE representations, it does not guarantee that individual SAE concepts correspond to CBM concepts. The following proposition formalizes this distinction.

\begin{proposition}[Global predictability does not imply pairwise activation alignment]
There exist nonnegative SAE and CBM activation matrices such that
\[
R^2 = 1,
\]
while
\[
\rho_{\mathrm{act}}
\]
can be arbitrarily small. Consequently, high regression predictability does not imply high pairwise activation alignment.
\end{proposition}

\begin{proof}
Let \(z_1,\ldots,z_m\) be independent Bernoulli random variables with
\[
z_\ell \in \{0,1\},
\qquad
\mathbb{E}[z_\ell]=\frac12,
\qquad
\mathrm{Var}(z_\ell)=\frac14.
\]
Thus all activations are nonnegative and have positive mean.

Define the CBM activations as
\[
\Z^{\mathrm{CBM}}
=
\begin{pmatrix}
z_1 & z_2 & \cdots & z_m
\end{pmatrix}.
\]

Now define SAE activations by nonnegative mixtures:
\[
\Z^{\mathrm{SAE}}
=
\Z^{\mathrm{CBM}} M,
\]
where
\[
M =
\mathbf{1}\mathbf{1}^{\top} + \epsilon I_m,
\]
with \(\epsilon>0\).  
Since \(M\) has strictly positive entries, \(\Z^{\mathrm{SAE}}\) is nonnegative.  
Moreover, \(M\) is invertible for every \(\epsilon>0\). Therefore,
\[
\Z^{\mathrm{CBM}}
=
\Z^{\mathrm{SAE}} M^{-1}.
\]
Hence there exists a linear map \(\Phi\) such that
\[
\Z^{\mathrm{CBM}}=\Phi(\Z^{\mathrm{SAE}}),
\]
and therefore
\[
R^2=1.
\]

We now study pairwise activation correlations.  
The \(j\)-th SAE activation is
\[
s_j
=
\sum_{\ell=1}^m z_\ell+\epsilon z_j.
\]
For any CBM concept \(z_i\), using independence,
\[
\mathrm{Cov}(s_j,z_i)
=
\begin{cases}
(1+\epsilon)\mathrm{Var}(z_i), & i=j,\\
\mathrm{Var}(z_i), & i\neq j.
\end{cases}
\]
Also,
\[
\mathrm{Var}(s_j)
=
\left((1+\epsilon)^2+(m-1)\right)\mathrm{Var}(z_i).
\]
Therefore,
\[
\max_i |\mathrm{corr}(s_j,z_i)|
=
\frac{1+\epsilon}
{\sqrt{(1+\epsilon)^2+(m-1)}}.
\]
As \(m\to\infty\), this quantity tends to \(0\). Hence
\[
\rho_{\mathrm{act}}
=
\frac{1}{m}\sum_{j=1}^m
\max_i |\mathrm{corr}(s_j,z_i)|
=
\frac{1+\epsilon}
{\sqrt{(1+\epsilon)^2+(m-1)}}
\to 0.
\]

Thus, the CBM activations are perfectly linearly predictable from the SAE activations, so \(R^2=1\), while pairwise activation alignment can be arbitrarily small.
\end{proof}

Together, these observations show why a single metric is insufficient.  
Coverage, sparsity, geometric alignment, activation alignment, and regression predictability are non-equivalent:
\begin{enumerate}[label=(\roman*)]
    \item high coverage does not imply high \(\rho_{\mathrm{geom}}\);
    \item high \(\rho_{\mathrm{geom}}\) does not imply high \(\rho_{\mathrm{act}}\);
    \item high \(\rho_{\mathrm{act}}\) does not imply sparse geometric recovery;
    \item high \(R^2\) does not imply one-to-one concept alignment.
\end{enumerate}
Thus, the proposed metrics are complementary: coverage measures containment of CBM directions in the SAE cone; sparsity measures the simplicity of the recovery; \(\rho_{\mathrm{geom}}\) measures axis-level agreement; \(\rho_{\mathrm{act}}\) measures sample-level agreement; and \(R^2\) measures global predictive transfer between the two concept spaces.

\section{Experimental settings}\label{sec:experimentalseting}
\subsection{Concept bottleneck model architecture}
The CBM branch is constructed in parallel to the SAE branch and operates on the same frozen backbone layer. Each CBM consists of two components. First, a feature-to-concept predictor \(W_{c}\) maps backbone activations to \(M\) concept logits, which are converted into probabilities through a sigmoid. Second, a bias-free linear classifier \(W_{y}\) maps the predicted concept vector to class logits. We evaluate three backbones: (i) \textbf{ResNet-50}, where the CBM/SAE is applied after global average pooling for a given layer; (ii) \textbf{ViT-B/16}; and (iii) \textbf{DINO-v2 ViT-B/14}, where both CBM and SAE operate on the class token. The models were trained on three seeds : 0, 42, and 2026.
  
\paragraph{Hardware.} All experiments were conducted on a single NVIDIA H100 GPU with 24 CPU cores allocated per job. Experiments on CUB-200 were performed in full FP32 precision, whereas ImageNet experiments used BF16 automatic mixed precision with a batch size of 2048. Each job, corresponding to a single random seed and backbone architecture, required approximately 8 to 20 hours to complete.

\paragraph{CUB.}
For CUB, we use the human-annotated bird attributes. Uncertain image-level annotations are removed and the remaining labels are aggregated into class-level binary concepts via majority vote. Following prior work \citep{koh2020concept}, we retain only attributes that appear in a sufficient number of classes. Out of 312 attributes, 168 pass this filtering. We jointly train \(W_{c}\) and \(W_{y}\) using a binary cross-entropy loss on the concepts and a cross-entropy loss on the class logits. We trained each backbone under its own optimized schedule: ResNet-50 was trained for 75 epochs with a learning rate of 0.01, ViT was trained for 50 epochs with a learning rate of 0.01, and DINO was loaded from \textbf{DINO-v2 ViT-B/14} and frozen, then a linear probing layer was trained for 50 epochs with a learning rate of 0.01. However, we plug the SAE on the CLS-token.

\paragraph{ImageNet.}
For ImageNet, we rely on the VLG-CBM concepts introduced in \citep{srivastava2024vlg}. These concepts are obtained via a VLLM annotation pipeline that also provides an uncertainty score for each concept. After thresholding by confidence and aggregating image-level labels into class-level concepts via majority vote, we retain 3545 concepts out of the original 5305 using the same strategy as that of CUB. Training again optimizes both concept prediction and class classification via the combined loss (similar to what we did on for CUB's dataset).

For each backbone, we loaded weights. For ResNet50, we use $''IMAGENET1K\_V2''$ weights. For ViT-B/16, we use $''IMAGENET1K\_V1''$ weights from Pytorch~\citep{paszke2019pytorch}. We load \textbf{DINO-v2 ViT-B/14} for the DINO backbone.

\paragraph{Concept Selection and Performance.}
The curated concept sets for both datasets allow the CBM to achieve strong predictive accuracy across backbones. Table~\ref{tab:label_free_cbm} compares our results to two baselines: P-CBM \citep{yuksekgonul2022post} and Label-Free CBM \citep{oikarinen2023label}. Our CBM consistently outperforms prior approaches, and the improvements hold across all architectures.

On \textbf{CUB}, our DINO-v2 CBM reaches \textbf{84.39\%}, outperforming the Label-Free CBM (74.31\%) by over ten points and substantially surpassing the P-CBM baseline (59.60\%). ResNet-50 and ViT-B/16 also achieve solid performance (72.79\% and 69.70\%, respectively), confirming that our concept selection and training pipeline generalizes across architectures.

On \textbf{ImageNet}, our CBM achieves \textbf{77--80\%} accuracy across backbones. DINO-v2 again yields the strongest result with \textbf{80.30\%}, clearly improving over the Label-Free CBM baseline (71.95\%). These high scores indicate that the learned concepts provide meaningful supervision even on a large and diverse dataset.

Overall, the table confirms that our CBM implementation is competitive with and often superior to previous CBM frameworks. Combined with the SAE results, this supports the conclusion that our CBMs form a reliable reference geometry for evaluating the alignment between supervised concepts and unsupervised sparse representations.

\begin{table}[htbp]
\centering
\scriptsize
\setlength{\tabcolsep}{4pt}
\renewcommand{\arraystretch}{1.05}
\caption{Comparison of CBM across datasets.}
\label{tab:label_free_cbm}
\begin{tabular}{l c c}
\toprule
\multicolumn{3}{c}{\textbf{Dataset}} \\
\cmidrule(lr){2-3}
\textbf{Model} & \textbf{CUB200} & \textbf{ImageNet} \\
\midrule
P-CBM             & 59.60\% & N/A \\
\midrule
\multirow{1}{*}{\begin{tabular}[c]{@{}l@{}}Label-free CBM\\\end{tabular}}
 & 74.31\% & 71.95\% \\
 \midrule
\multirow{1}{*}{\begin{tabular}[c]{@{}l@{}}
 Our CBM on RN50\\\end{tabular}} & 72.79\% & 77.95\% \\
\multirow{1}{*}{\begin{tabular}[c]{@{}l@{}}
Our CBM on VIT\\\end{tabular}} & 69.70\% & 78.12\% \\
\multirow{1}{*}{\begin{tabular}[c]{@{}l@{}}
Our CBM on DINO\\\end{tabular}} & \textbf{84.39\%} & \textbf{80.30\%} \\
\bottomrule
\end{tabular}
\end{table}

\section{Additional results}
\subsection{ Sparsity evaluation.}\label{sec:experimentsSparsity}
The ``Sparsity'' column reports the fraction of inactive units in the learned representations, where values close to $1$ indicate highly sparse codes. Lower \texttt{target\_L0} values produce extremely sparse encodings (few active atoms), whereas higher \texttt{target\_L0} yields denser representations (more active units). We analyze how this trade-off influences alignment measures, including geometric alignment ($\rho_{\text{geom}}$), activation correlation ($\rho_{\text{act}}$), CBM-concept coverage, regression predictability ($R^2$), matching entropy, and downstream F1 performance.

Overall, increasing the number of active units (i.e., reducing sparsity) improves coverage, $R^2$, and sample-level F1, but leads to a loss in geometric concentration. This illustrates a trade-off between sparsity (parsimony of representation) and the semantic coverage required to approximate the CBM concept space. Consistent patterns appear across backbones: geometric alignment and entropy peak at higher sparsity levels (fewer active neurons), while activation correlation, coverage, $R^2$, and F1 reach their best performance at lower sparsity, where denser codes better reconstruct the CBM concept structure.

\begin{table}[htbp]
\centering
\scriptsize
\setlength{\tabcolsep}{4pt}
\renewcommand{\arraystretch}{1.05}
\caption{\texttt{TopK} SAE results (dimension = 8192) on CUB and ImageNet (IN) using ResNet-50 (R50). ``target\_L0'' is the target activation level for \texttt{TopK}; ``Sparsity'' is the observed fraction of inactive units (higher = sparser). The SAE has a $\times4$ expansion factor.}
\label{tab:topk_sae_targets}
\resizebox{0.8\columnwidth}{!}
        {
\begin{tabular}{c|c c c c c c c}
\toprule
\textbf{Dataset--Backbone} 
& \textbf{target\_L0} 
& \textbf{Sparsity} 
& $\rho_{\text{geom}}$ 
& $\rho_{\text{act}}$ 
& \textbf{Coverage} 
& $R^2$ 
& \textbf{F1} \\
\midrule

\multirow{7}{*}{\rotatebox{90}{CUB--R50}}
 & 0.0012 & $0.9988$ & $\mathbf{0.0743 \pm 0.0034}$ & $0.0342 \pm 0.0001$ & $0.1469 \pm 0.0150$ & $\mathbf{1.0000 \pm 0.0000}$ & $\mathbf{0.0987 \pm 0.0034}$ \\
 & 0.0024 & $0.9976$ & $0.0733 \pm 0.0023$ & $0.0418 \pm 0.0002$ & $0.4006 \pm 0.0204$ & $\mathbf{1.0000 \pm 0.0000}$ & $0.0878 \pm 0.0007$ \\
 & 0.0050 & $0.9950$ & $0.0721 \pm 0.0018$ & $0.0475 \pm 0.0005$ & $0.6170 \pm 0.0104$ & $\mathbf{1.0000 \pm 0.0000}$ & $0.0824 \pm 0.0031$ \\
 & 0.0120 & $0.9880$ & $0.0684 \pm 0.0013$ & $0.0524 \pm 0.0003$ & $0.7715 \pm 0.0083$ & $\mathbf{1.0000 \pm 0.0000}$ & $0.0781 \pm 0.0056$ \\
 & 0.0240 & $0.9773$ & $0.0662 \pm 0.0008$ & $0.0675 \pm 0.0024$ & $0.8014 \pm 0.0173$ & $\mathbf{1.0000 \pm 0.0000}$ & $0.0734 \pm 0.0074$ \\
 & 0.0500 & $0.9667$ & $0.0641 \pm 0.0068$ & $0.0680 \pm 0.0048$ & $0.8239 \pm 0.0320$ & $0.9989 \pm 0.0022$ & $0.0736 \pm 0.0035$ \\
 & 0.1000 & $0.9645$ & $0.0622 \pm 0.0056$ & $\mathbf{0.0700 \pm 0.0062}$ & $\mathbf{0.8373 \pm 0.0237}$ & $0.9991 \pm 0.0018$ & $0.0739 \pm 0.0048$ \\

\midrule

\multirow{7}{*}{\rotatebox{90}{IN--R50}}
 & 0.0012 & $0.9988$ & $0.1153 \pm 0.0016$ & $0.0956 \pm 0.0202$ & $0.0502 \pm 0.0026$ & $0.7437 \pm 0.0099$ & $0.1431 \pm 0.0004$ \\
 & 0.0024 & $0.9976$ & $0.1149 \pm 0.0018$ & $0.1192 \pm 0.0062$ & $0.2297 \pm 0.0185$ & $0.7461 \pm 0.0041$ & $0.1370 \pm 0.0003$ \\
 & 0.0050 & $0.9950$ & $0.1220 \pm 0.0023$ & $0.1496 \pm 0.0005$ & $0.7203 \pm 0.0151$ & $0.7761 \pm 0.0015$ & $0.1359 \pm 0.0001$ \\
 & 0.0120 & $0.9880$ & $\mathbf{0.1255 \pm 0.0002}$ & $0.1667 \pm 0.0026$ & $0.9368 \pm 0.0027$ & $\mathbf{0.7942 \pm 0.0042}$ & $0.1380 \pm 0.0002$ \\
 & 0.0240 & $0.9760$ & $0.1223 \pm 0.0012$ & $\mathbf{0.1685 \pm 0.0007}$ & $0.9574 \pm 0.0028$ & $0.7940 \pm 0.0024$ & $0.1360 \pm 0.0030$ \\
 & 0.0500 & $0.9500$ & $0.1007 \pm 0.0006$ & $0.1605 \pm 0.0005$ & $\mathbf{0.9650 \pm 0.0001}$ & $0.7787 \pm 0.0021$ & $0.1409 \pm 0.0005$ \\
 & 0.1000 & $0.9000$ & $0.0926 \pm 0.0007$ & $0.1514 \pm 0.0005$ & $0.9626 \pm 0.0005$ & $0.7617 \pm 0.0007$ & $\mathbf{0.1469 \pm 0.0005}$ \\

\midrule

\multirow{7}{*}{\rotatebox{90}{CUB--ViT16B}}
 & 0.0012 & $0.9987$ & $\mathbf{0.0835 \pm 0.0047}$ & $0.0256 \pm 0.0002$ & $0.1309 \pm 0.0081$ & $0.7529 \pm 0.0111$ & $\mathbf{0.1274 \pm 0.0029}$ \\
 & 0.0024 & $0.9977$ & $0.0777 \pm 0.0049$ & $0.0386 \pm 0.0009$ & $0.2426 \pm 0.0064$ & $0.8104 \pm 0.0082$ & $0.1273 \pm 0.0025$ \\
 & 0.0050 & $0.9951$ & $0.0761 \pm 0.0048$ & $0.0550 \pm 0.0016$ & $0.4161 \pm 0.0157$ & $0.8667 \pm 0.0055$ & $0.1225 \pm 0.0013$ \\
 & 0.0120 & $0.9880$ & $0.0792 \pm 0.0042$ & $0.0699 \pm 0.0014$ & $0.6618 \pm 0.0359$ & $0.9112 \pm 0.0060$ & $0.1072 \pm 0.0010$ \\
 & 0.0240 & $0.9759$ & $0.0821 \pm 0.0032$ & $0.0733 \pm 0.0014$ & $0.7545 \pm 0.0136$ & $0.9281 \pm 0.0078$ & $0.0868 \pm 0.0018$ \\
 & 0.0500 & $0.9499$ & $0.0823 \pm 0.0015$ & $0.0790 \pm 0.0039$ & $0.7752 \pm 0.0086$ & $0.9437 \pm 0.0079$ & $0.0782 \pm 0.0030$ \\
 & 0.1000 & $0.9182$ & $0.0796 \pm 0.0051$ & $\mathbf{0.1079 \pm 0.0174}$ & $\mathbf{0.7957 \pm 0.1044}$ & $\mathbf{0.9598 \pm 0.0073}$ & $0.0854 \pm 0.0100$ \\

\midrule

\multirow{7}{*}{\rotatebox{90}{IN--ViT16B}}
 & 0.0012 & $0.9987$ & $\mathbf{0.0925 \pm 0.0002}$ & $0.1745 \pm 0.0006$ & $0.0074 \pm 0.0001$ & $0.7815 \pm 0.0012$ & $0.1717 \pm 0.0001$ \\
 & 0.0024 & $0.9977$ & $0.0762 \pm 0.0004$ & $0.2109 \pm 0.0017$ & $0.0114 \pm 0.0011$ & $0.7863 \pm 0.0011$ & $0.1705 \pm 0.0001$ \\
 & 0.0050 & $0.9951$ & $0.0612 \pm 0.0002$ & $0.2748 \pm 0.0018$ & $0.0401 \pm 0.0044$ & $0.7976 \pm 0.0006$ & $0.1694 \pm 0.0001$ \\
 & 0.0120 & $0.9880$ & $0.0546 \pm 0.0002$ & $\mathbf{0.3174 \pm 0.0015}$ & $0.6585 \pm 0.0091$ & $\mathbf{0.8048 \pm 0.0001}$ & $0.1713 \pm 0.0001$ \\
 & 0.0240 & $0.9759$ & $0.0556 \pm 0.0004$ & $0.3128 \pm 0.0009$ & $0.8437 \pm 0.0011$ & $0.8014 \pm 0.0005$ & $\mathbf{0.1723 \pm 0.0008}$ \\
 & 0.0500 & $0.9499$ & $0.0616 \pm 0.0002$ & $0.3069 \pm 0.0002$ & $0.9099 \pm 0.0038$ & $0.8027 \pm 0.0005$ & $0.1703 \pm 0.0010$ \\
 & 0.1000 & $0.9001$ & $0.0712 \pm 0.0072$ & $0.2916 \pm 0.0334$ & $\mathbf{0.9564 \pm 0.0067}$ & $0.7950 \pm 0.0295$ & $0.1590 \pm 0.0115$ \\

\midrule

\multirow{7}{*}{\rotatebox{90}{CUB--DINO}}
 & 0.0012 & $0.9987$ & $\mathbf{0.1334 \pm 0.0113}$ & $0.0359 \pm 0.0004$ & $0.0810 \pm 0.0210$ & $0.8549 \pm 0.0220$ & $0.1206 \pm 0.0009$ \\
 & 0.0024 & $0.9977$ & $0.1150 \pm 0.0096$ & $0.0472 \pm 0.0005$ & $0.1590 \pm 0.0264$ & $0.8913 \pm 0.0166$ & $\mathbf{0.1221 \pm 0.0009}$ \\
 & 0.0050 & $0.9951$ & $0.1026 \pm 0.0072$ & $0.0610 \pm 0.0005$ & $0.3672 \pm 0.0201$ & $0.9179 \pm 0.0151$ & $0.1209 \pm 0.0020$ \\
 & 0.0120 & $0.9880$ & $0.0964 \pm 0.0047$ & $0.0690 \pm 0.0007$ & $0.7424 \pm 0.0120$ & $0.9344 \pm 0.0131$ & $0.1131 \pm 0.0024$ \\
 & 0.0240 & $0.9759$ & $0.0932 \pm 0.0022$ & $0.0713 \pm 0.0005$ & $0.8353 \pm 0.0310$ & $0.9452 \pm 0.0123$ & $0.1106 \pm 0.0025$ \\
 & 0.0500 & $0.9499$ & $0.0883 \pm 0.0002$ & $0.0743 \pm 0.0009$ & $0.8337 \pm 0.0389$ & $0.9567 \pm 0.0109$ & $0.1089 \pm 0.0018$ \\
 & 0.1000 & $0.9181$ & $0.0887 \pm 0.0027$ & $\mathbf{0.0893 \pm 0.0014}$ & $\mathbf{0.8448 \pm 0.0273}$ & $\mathbf{0.9610 \pm 0.0086}$ & $0.1061 \pm 0.0038$ \\

\midrule

\multirow{7}{*}{\rotatebox{90}{IN--DINO}}
 & 0.0012 & $0.9987$ & $\mathbf{0.1773 \pm 0.0006}$ & $0.1871 \pm 0.0009$ & $0.0075 \pm 0.0000$ & $0.7749 \pm 0.0016$ & $\mathbf{0.1536 \pm 0.0003}$ \\
 & 0.0024 & $0.9977$ & $0.1425 \pm 0.0007$ & $0.2294 \pm 0.0006$ & $0.0110 \pm 0.0005$ & $0.7909 \pm 0.0005$ & $0.1524 \pm 0.0006$ \\
 & 0.0050 & $0.9951$ & $0.1047 \pm 0.0007$ & $0.2829 \pm 0.0011$ & $0.0452 \pm 0.0026$ & $0.8023 \pm 0.0003$ & $0.1494 \pm 0.0001$ \\
 & 0.0120 & $0.9880$ & $0.0780 \pm 0.0001$ & $0.3227 \pm 0.0008$ & $0.5258 \pm 0.0311$ & $\mathbf{0.8064 \pm 0.0004}$ & $0.1488 \pm 0.0001$ \\
 & 0.0240 & $0.9759$ & $0.0741 \pm 0.0001$ & $0.3262 \pm 0.0007$ & $0.8382 \pm 0.0041$ & $0.8022 \pm 0.0002$ & $0.1491 \pm 0.0001$ \\
 & 0.0500 & $0.9499$ & $0.0727 \pm 0.0001$ & $\mathbf{0.3306 \pm 0.0002}$ & $0.8932 \pm 0.0025$ & $0.8001 \pm 0.0003$ & $0.1491 \pm 0.0003$ \\
 & 0.1000 & $0.9001$ & $0.0737 \pm 0.0001$ & $0.3183 \pm 0.0009$ & $\mathbf{0.9230 \pm 0.0011}$ & $0.8009 \pm 0.0003$ & $0.1476 \pm 0.0003$ \\

\bottomrule
\end{tabular}
}
\end{table}

\subsection{Layer-wise evaluation of \texttt{TopK} SAEs.}\label{sec:experimentsLayer_wise}

For the ResNet-50 backbone, we conduct a layer-wise analysis by extracting features at each convolutional block and attaching a \texttt{TopK} SAE to each representation. The evaluated layer is reported in the ``Layer'' column of the table. Metrics are computed independently at every layer.

On the CUB-200 dataset, both $\rho_{\text{geom}}$ and $\rho_{\text{act}}$ achieve their highest values in shallower layers, whereas the remaining metrics (coverage, $R^2$, entropy, and F1) achieve their best performance in deeper layers. A similar trend is observed for ImageNet, except that $\rho_{\text{act}}$ peaks in deeper layers for this dataset. The stronger geometric alignment in early layers is likely due to their lower representational specificity: features are more generic, enabling the SAE to recover fewer but clearer latent directions. Activation correlation follows this trend on CUB, but on ImageNet the CBM concepts are more fine-grained, making deeper layers better aligned with them.

\begin{table}[htbp]
\centering
\scriptsize
\setlength{\tabcolsep}{3pt}
\renewcommand{\arraystretch}{1.08}
\caption{Layer-wise evaluation of \texttt{TopK} SAEs trained on CUB and ImageNet (IN) with ResNet-50 (R50) features with a $\times4$ expansion factor. Values are reported as mean $\pm$ standard deviation over three seeds. Sparsity is computed as $1-\mathrm{L0\ fraction}$. Metrics assess geometric and activation alignment ($\rho_{\text{geom}}$, $\rho_{\text{act}}$), coverage of CBM concepts, predictability ($R^2$), and downstream TopK F1 performance.}
\label{tab:layerwise_topk_sae}
\resizebox{\linewidth}{!}{
\begin{tabular}{c|c c c c c c c}
\toprule
\textbf{Dataset--Backbone} 
& \textbf{Layer (Dim)} 
& \textbf{Sparsity} 
& $\rho_{\text{geom}}$ 
& $\rho_{\text{act}}$ 
& \textbf{Coverage} 
& $R^2$ 
& \textbf{F1} \\
\midrule

\multirow{4}{*}{\rotatebox{90}{CUB--R50}}
 & Layer 1 (256) 
 & $0.995$
 & $\mathbf{0.1989 \pm 0.0393}$ 
 & $\mathbf{0.0504 \pm 0.0031}$ 
 & $0.0217 \pm 0.0046$ 
 & $0.7173 \pm 0.0208$ 
 & $0.0161 \pm 0.0071$ \\

 & Layer 2 (512) 
 & $0.995$ 
 & $0.0926 \pm 0.0076$ 
 & $0.0482 \pm 0.0019$ 
 & $0.0106 \pm 0.0009$ 
 & $0.7721 \pm 0.0066$ 
 & $0.0326 \pm 0.0177$ \\

 & Layer 3 (1024) 
 & $0.995$ 
 & $0.0802 \pm 0.0041$ 
 & $0.0414 \pm 0.0004$ 
 & $0.0517 \pm 0.0024$ 
 & $0.8777 \pm 0.0028$ 
 & $0.0233 \pm 0.0029$ \\

 & Layer 4 (2048) 
 & $0.995$ 
 & $0.0735 \pm 0.0036$ 
 & $0.0473 \pm 0.0003$ 
 & $\mathbf{0.6084 \pm 0.0077}$ 
 & $\mathbf{1.0000 \pm 0.0000}$ 
 & $\mathbf{0.0775 \pm 0.0046}$ \\

\midrule

\multirow{4}{*}{\rotatebox{90}{IN--R50}}
 & Layer 1 (256) 
 & $0.995$ 
 & $\mathbf{0.2472 \pm 0.0236}$ 
 & $0.0350 \pm 0.0168$ 
 & $0.0412 \pm 0.0144$ 
 & $0.1818 \pm 0.0201$ 
 & $0.0001 \pm 0.0001$ \\

 & Layer 2 (512) 
 & $0.995$ 
 & $0.2169 \pm 0.0163$ 
 & $0.0383 \pm 0.0071$ 
 & $0.0243 \pm 0.0057$ 
 & $0.1757 \pm 0.0159$ 
 & $0.0325 \pm 0.0157$ \\

 & Layer 3 (1024) 
 & $0.995$ 
 & $0.1682 \pm 0.0049$ 
 & $0.0455 \pm 0.0028$ 
 & $0.0230 \pm 0.0034$ 
 & $0.3077 \pm 0.0047$ 
 & $0.0348 \pm 0.0040$ \\

 & Layer 4 (2048) 
 & $0.995$ 
 & $0.1239 \pm 0.0015$ 
 & $\mathbf{0.1506 \pm 0.0008}$ 
 & $\mathbf{0.7131 \pm 0.0024}$ 
 & $\mathbf{0.7789 \pm 0.0024}$ 
 & $\mathbf{0.1359 \pm 0.0002}$ \\

\bottomrule
\end{tabular}
}
\end{table}
\subsection{Expansion factor analysis of \texttt{TopK} SAEs.}
\label{sec:experimentsExpansionfactor}
We also evaluate the effect of the expansion factor across all backbones for both CUB and ImageNet. The tested expansion factors, reported in the ``Expansion'' column, range from $\times1$ to $\times8$, with the corresponding SAE latent dimensions shown in brackets.

Across architectures and datasets, we observe consistent behavior: geometric alignment ($\rho_{\text{geom}}$) and activation correlation ($\rho_{\text{act}}$) are strongest at lower expansion factors, corresponding to fewer distinct concept directions. Conversely, larger latent spaces, achieved through higher expansion factors, tend to improve coverage, $R^2$, entropy, and F1, indicating that increased capacity helps capture a broader and more detailed concept space.

\begin{table}[htbp]
\centering
\scriptsize
\setlength{\tabcolsep}{4pt}
\renewcommand{\arraystretch}{1.05}
\caption{Effect of the expansion factor on \texttt{TopK} SAEs trained with ResNet-50 (R50), ViT-B/16, and DINOv2 features for CUB and ImageNet (IN). Increasing the expansion factor enlarges the latent space dimensionality. Values are reported as mean $\pm$ standard deviation over three seeds. Metrics evaluate geometric and activation alignment ($\rho_{\text{geom}}$, $\rho_{\text{act}}$), concept coverage, predictability ($R^2$), and downstream TopK F1.}
\label{tab:expansion_topk_sae}
\resizebox{\linewidth}{!}{
\begin{tabular}{c|c c c c c c c}
\toprule
\textbf{Dataset--Backbone} 
& \textbf{Expansion} 
& \textbf{Sparsity} 
& $\rho_{\text{geom}}$ 
& $\rho_{\text{act}}$ 
& \textbf{Coverage} 
& $R^2$ 
& \textbf{F1} \\
\midrule

\multirow{4}{*}{\rotatebox{90}{CUB--R50}}
 & $\times$1 (2048)  
 & $0.9951$ 
 & $\mathbf{0.0873 \pm 0.0079}$ 
 & $\mathbf{0.0546 \pm 0.0006}$ 
 & $0.0699 \pm 0.0138$ 
 & $0.8792 \pm 0.0121$ 
 & $\mathbf{0.1067 \pm 0.0025}$ \\

 & $\times$2 (4096)  
 & $0.9951$ 
 & $0.0751 \pm 0.0067$ 
 & $0.0517 \pm 0.0005$ 
 & $0.2341 \pm 0.0323$ 
 & $0.9517 \pm 0.0078$ 
 & $0.0907 \pm 0.0042$ \\

 & $\times$4 (8192)  
 & $0.9950$ 
 & $0.0736 \pm 0.0046$ 
 & $0.0475 \pm 0.0007$ 
 & $0.6077 \pm 0.0098$ 
 & $\mathbf{1.0000 \pm 0.0000}$ 
 & $0.0800 \pm 0.0048$ \\

 & $\times$8 (16384) 
 & $0.9950$ 
 & $0.0713 \pm 0.0008$ 
 & $0.0413 \pm 0.0007$ 
 & $\mathbf{0.7877 \pm 0.0241}$ 
 & $\mathbf{1.0000 \pm 0.0000}$ 
 & $0.0790 \pm 0.0084$ \\

\midrule

\multirow{4}{*}{\rotatebox{90}{IN--R50}}
 & $\times$1 (2048)  
 & $0.9951$ 
 & $\mathbf{0.1259 \pm 0.0002}$ 
 & $\mathbf{0.2072 \pm 0.0102}$ 
 & $0.0339 \pm 0.0011$ 
 & $0.6545 \pm 0.0040$ 
 & $0.1352 \pm 0.0004$ \\

 & $\times$2 (4096)  
 & $0.9951$ 
 & $0.1183 \pm 0.0003$ 
 & $0.1651 \pm 0.0108$ 
 & $0.1666 \pm 0.0095$ 
 & $0.7111 \pm 0.0041$ 
 & $0.1371 \pm 0.0001$ \\

 & $\times$4 (8192)  
 & $0.9950$ 
 & $0.1239 \pm 0.0015$ 
 & $0.1506 \pm 0.0008$ 
 & $0.7131 \pm 0.0024$ 
 & $0.7789 \pm 0.0024$ 
 & $0.1359 \pm 0.0002$ \\

 & $\times$8 (16384) 
 & $0.9950$ 
 & $0.1171 \pm 0.0003$ 
 & $0.1170 \pm 0.0022$ 
 & $\mathbf{0.9372 \pm 0.0006}$ 
 & $\mathbf{0.8436 \pm 0.0023}$ 
 & $\mathbf{0.1373 \pm 0.0001}$ \\

\midrule

\multirow{4}{*}{\rotatebox{90}{CUB--ViT}}
 & $\times$1 (768) 
 & $0.9948$ 
 & $\mathbf{0.0917 \pm 0.0084}$ 
 & $\mathbf{0.0685 \pm 0.0007}$ 
 & $0.0788 \pm 0.0131$ 
 & $0.7312 \pm 0.0158$ 
 & $0.1258 \pm 0.0043$ \\

 & $\times$2 (1536) 
 & $0.9948$ 
 & $0.0864 \pm 0.0101$ 
 & $0.0626 \pm 0.0012$ 
 & $0.2005 \pm 0.0211$ 
 & $0.8076 \pm 0.0246$ 
 & $\mathbf{0.1299 \pm 0.0052}$ \\

 & $\times$4 (3072) 
 & $0.9951$ 
 & $0.0806 \pm 0.0058$ 
 & $0.0546 \pm 0.0016$ 
 & $0.4381 \pm 0.0230$ 
 & $0.8745 \pm 0.0177$ 
 & $0.1250 \pm 0.0004$ \\

 & $\times$8 (6144) 
 & $0.9950$ 
 & $0.0866 \pm 0.0002$ 
 & $0.0459 \pm 0.0012$ 
 & $\mathbf{0.7996 \pm 0.0129}$ 
 & $\mathbf{0.9774 \pm 0.0052}$ 
 & $0.1133 \pm 0.0011$ \\

\midrule

\multirow{4}{*}{\rotatebox{90}{IN--ViT}}
 & $\times$1 (768) 
 & $0.9948$ 
 & $0.0647 \pm 0.0001$ 
 & $\mathbf{0.4586 \pm 0.0009}$ 
 & $0.0050 \pm 0.0000$ 
 & $0.6717 \pm 0.0011$ 
 & $0.1586 \pm 0.0007$ \\

 & $\times$2 (1536) 
 & $0.9948$ 
 & $\mathbf{0.0686 \pm 0.0002}$ 
 & $0.3351 \pm 0.0007$ 
 & $0.0127 \pm 0.0003$ 
 & $0.7630 \pm 0.0009$ 
 & $\mathbf{0.1719 \pm 0.0003}$ \\

 & $\times$4 (3072) 
 & $0.9951$ 
 & $0.0612 \pm 0.0001$ 
 & $0.2733 \pm 0.0012$ 
 & $0.0411 \pm 0.0033$ 
 & $0.7968 \pm 0.0010$ 
 & $0.1692 \pm 0.0004$ \\

 & $\times$8 (6144) 
 & $0.9950$ 
 & $0.0549 \pm 0.0000$ 
 & $0.2387 \pm 0.0001$ 
 & $\mathbf{0.7631 \pm 0.0044}$ 
 & $\mathbf{0.8335 \pm 0.0003}$ 
 & $0.1694 \pm 0.0005$ \\

\midrule

\multirow{4}{*}{\rotatebox{90}{CUB--DINO}}
 & $\times$1 (768) 
 & $0.9948$ 
 & $\mathbf{0.2067 \pm 0.0183}$ 
 & $\mathbf{0.0716 \pm 0.0006}$ 
 & $0.0747 \pm 0.0209$ 
 & $0.7937 \pm 0.0267$ 
 & $\mathbf{0.1261 \pm 0.0017}$ \\

 & $\times$2 (1536) 
 & $0.9948$ 
 & $0.1348 \pm 0.0103$ 
 & $0.0668 \pm 0.0007$ 
 & $0.1426 \pm 0.0249$ 
 & $0.8592 \pm 0.0207$ 
 & $0.1240 \pm 0.0023$ \\

 & $\times$4 (3072) 
 & $0.9951$ 
 & $0.1057 \pm 0.0059$ 
 & $0.0608 \pm 0.0008$ 
 & $0.3814 \pm 0.0143$ 
 & $0.9255 \pm 0.0124$ 
 & $0.1213 \pm 0.0027$ \\

 & $\times$8 (6144) 
 & $0.9950$ 
 & $0.0982 \pm 0.0036$ 
 & $0.0514 \pm 0.0001$ 
 & $\mathbf{0.7974 \pm 0.0186}$ 
 & $\mathbf{1.0000 \pm 0.0000}$ 
 & $0.1168 \pm 0.0036$ \\

\midrule

\multirow{4}{*}{\rotatebox{90}{IN--DINO}}
 & $\times$1 (768) 
 & $0.9948$ 
 & $\mathbf{0.1177 \pm 0.0003}$ 
 & $\mathbf{0.4610 \pm 0.0010}$ 
 & $0.0049 \pm 0.0001$ 
 & $0.6943 \pm 0.0019$ 
 & $0.1460 \pm 0.0009$ \\

 & $\times$2 (1536) 
 & $0.9948$ 
 & $0.1126 \pm 0.0014$ 
 & $0.3566 \pm 0.0018$ 
 & $0.0099 \pm 0.0001$ 
 & $0.7666 \pm 0.0007$ 
 & $\mathbf{0.1513 \pm 0.0005}$ \\

 & $\times$4 (3072) 
 & $0.9951$ 
 & $0.1041 \pm 0.0007$ 
 & $0.2845 \pm 0.0028$ 
 & $0.0482 \pm 0.0023$ 
 & $0.8031 \pm 0.0007$ 
 & $0.1494 \pm 0.0002$ \\

 & $\times$8 (6144) 
 & $0.9950$ 
 & $0.0941 \pm 0.0004$ 
 & $0.2609 \pm 0.0016$ 
 & $\mathbf{0.7021 \pm 0.0170}$ 
 & $\mathbf{0.8413 \pm 0.0004}$ 
 & $0.1470 \pm 0.0003$ \\

\bottomrule
\end{tabular}
}
\end{table}
\section{Why random SAEs exhibit higher coverage}\label{sec:SanityV2}

\subsection{Convex hull dispersion.}
We observe that random SAEs consistently achieve higher Coverage scores than  trained SAEs. This effect does not indicate better semantic alignment, but rather reflects a geometric artifact arising from the lack of learned structure in random representations.

A trained SAE learns a structured and compact representation space: its concepts are organized around meaningful semantic directions, resulting in a relatively narrow distribution of concept prototypes. As a consequence, the convex hull spanned by these learned concepts is constrained and occupies a smaller volume in representation space. While this structure improves interpretability and alignment with downstream CBMs, it naturally limits coverage.

In contrast, a random SAE does not learn any semantic structure. Its concept vectors are effectively random directions in a high-dimensional space, leading to a much more dispersed configuration. As a result, the convex hull formed by these random prototypes is significantly larger, which might increase the Coverage. This leads to high Coverage scores, despite the absence of meaningful semantic alignment.

Directly computing the convex hull in the SAE concept space is intractable due to its high dimensionality (4096 dimensions). To approximate the relative size of the convex hull, we instead analyze the pairwise geometric dispersion of concept prototypes. Specifically, we compute the Gram matrix of the concept vectors and measure the pairwise $\ell_2$ distances between all prototype pairs. The median of these distances serves as a proxy for the overall spread of the concept space: larger median distances indicate a more expansive convex hull.

Table~\ref{tab:convex_hull} reports the median pairwise distances for random and trained SAEs. We observe that the random SAE exhibits the largest median distance, confirming that its concept space is the most dispersed. Trained SAEs (on ImageNet and CUB) show more compact representations, consistent with their learned semantic structure.

\begin{table}[t]
\centering
\caption{Median pairwise $\ell_2$ distance between SAE concept prototypes, used as a proxy for convex hull size.}
\label{tab:convex_hull}
\begin{tabular}{l c}
\toprule
\textbf{SAE Type} & \textbf{Median Distance} \\
\midrule
Random SAE & 1.4142 \\
CUB-trained SAE & 1.3900 \\
ImageNet-trained SAE & 1.4051 \\
\bottomrule
\end{tabular}
\end{table}

These results support our interpretation that random SAEs span a larger convex hull due to unstructured and isotropic representations.

\subsection{Asymmetry of cone containment.}\label{subsec:ConeContainement}

We provide an experimental setup to show the asymmetric inclusion of Proposition \ref{prop:RecoMono}. For each backbone and each dataset, we train a \texttt{TopK} SAE and instantiate a randomly initialized SAE of the same architecture. We then compute the Coverage (eq. \ref{coverage}) between the trained SAE (with dictionary $D_{trained}$) and the random SAE (with dictionary $D_{rand}$), and symmetric calculation; inverting the role of $D_{trained}$ and $D_{rand}$ in the cover. 

Table \ref{tab:cone_asymmetry} shows a consistent asymmetry across all backbones and datasets. This table shows that the trained cone can be well approximated by the random cone. On the other hand, the random cone is poorly approximated by the trained one. This asymmetry validates Proposition \ref{prop:RecoMono}: the random dictionary has a strictly larger cone, while the trained dictionary spans a smaller, more constructed cone.

\begin{table}[h]
\centering
\caption{Empirical asymmetry of cone containment between trained and random \texttt{TopK} SAEs. Values are reported as mean $\pm$ standard deviation over three seeds.}
\label{tab:cone_asymmetry}
\begin{tabular}{llcc}
\toprule
Backbone & Dataset & Cov(trained $\to$ random) & Cov(random $\to$ trained) \\
\midrule
DINOv2 ViT-B/14 & CUB      & $0.626 \pm 0.003$ & $0.113 \pm 0.002$ \\
ResNet-50       & CUB      & $0.515 \pm 0.005$ & $0.096 \pm 0.005$ \\
ViT-B/16        & CUB      & $0.625 \pm 0.005$ & $0.262 \pm 0.008$ \\
\midrule
DINOv2 ViT-B/14 & ImageNet & $0.623 \pm 0.004$ & $0.267 \pm 0.000$ \\
ResNet-50       & ImageNet & $0.508 \pm 0.001$ & $0.139 \pm 0.002$ \\
ViT-B/16        & ImageNet & $0.624 \pm 0.003$ & $0.450 \pm 0.001$ \\
\bottomrule
\end{tabular}
\end{table}

\section{Comparison of different CBMs}\label{Appendix:dif:CBM}

\subsection{Independent, joint and sequential.}\label{Appendix:indjointseq}
The evaluation framework is robust to the choice of CBM used as a reference. We train the three CBM variants introduced by Pang Wei Koh et al.~\cite{koh2020concept}—Independent, Joint, and Sequential CBMs—using a ResNet-50 backbone on the CUB dataset. Results are reported in Table~\ref{tab:sae_comparison_cbm_types}.

Overall, the relative ranking of SAE variants remains stable across CBM types, indicating that our evaluation framework is not strongly dependent on the specific supervised reference used. Metrics obtained with Independent and Sequential CBMs are numerically very similar across all SAE variants. In contrast, Joint CBMs lead to higher alignment scores with SAE representations and higher $R^2$, while producing lower coverage values.

\begin{table}[t]
\centering
\footnotesize
\caption{Comparison of different SAE variants on CUB using ResNet-50 for three CBM training strategies.}
\label{tab:sae_comparison_cbm_types}
\resizebox{\linewidth}{!}{
\begin{tabular}{c|lcccc}
\toprule
\textbf{CBM Type}
& \textbf{SAE Type}
& \textbf{Coverage}
& $R^2$
& $\mathrm{Top}\text{-}k~F_1$
& $\rho_{\mathrm{uni}}$ \\
\midrule

\multirow{6}{*}{\rotatebox{90}{Independent}}
& \texttt{Vanilla}
& $\mathbf{0.8323}$
& $0.9240$
& $0.0652$
& $\mathbf{0.0758}$ \\

& \texttt{TopK}
& $0.1177$
& $0.9022$
& $0.0796$
& $0.0513$ \\

& \texttt{BatchTopK}
& $0.6321$
& $0.9088$
& $0.0779$
& $0.0537$ \\

& \texttt{Jump}
& $0.8291$
& $\mathbf{0.9250}$
& $0.0625$
& $0.0732$ \\

& \texttt{MP}
& $0.8202$
& $0.5058$
& $0.0496$
& $0.0274$ \\

& \texttt{Archetypal}
& $0.5484$
& $0.9051$
& $\mathbf{0.0862}$
& $0.0553$ \\

\midrule

\multirow{6}{*}{\rotatebox{90}{Joint}}
& \texttt{Vanilla}
& $\mathbf{0.7031}$
& $0.9623$
& $0.0672$
& $\mathbf{0.0829}$ \\

& \texttt{TopK}
& $0.1780$
& $0.9437$
& $0.0861$
& $0.0610$ \\

& \texttt{BatchTopK}
& $0.5694$
& $0.9491$
& $0.0857$
& $0.0643$ \\

& \texttt{Jump}
& $0.6996$
& $\mathbf{0.9627}$
& $0.0667$
& $0.0801$ \\

& \texttt{MP}
& $0.6747$
& $0.6492$
& $0.0538$
& $0.0279$ \\

& \texttt{Archetypal}
& $0.4732$
& $0.9435$
& $\mathbf{0.0932}$
& $0.0673$ \\

\midrule

\multirow{6}{*}{\rotatebox{90}{Sequential}}
& \texttt{Vanilla}
& $\mathbf{0.8324}$
& $0.9248$
& $0.0653$
& $\mathbf{0.0760}$ \\

& \texttt{TopK}
& $0.1176$
& $0.9033$
& $0.0815$
& $0.0514$ \\

& \texttt{BatchTopK}
& $0.6318$
& $0.9098$
& $0.0790$
& $0.0538$ \\

& \texttt{Jump}
& $0.8292$
& $\mathbf{0.9257}$
& $0.0638$
& $0.0734$ \\

& \texttt{MP}
& $0.8204$
& $0.5111$
& $0.0519$
& $0.0274$ \\

& \texttt{Archetypal}
& $0.5482$
& $0.9060$
& $\mathbf{0.0872}$
& $0.0554$ \\

\bottomrule
\end{tabular}
}
\end{table}

\subsection{Random CBM}\label{appendix:baselineCBM}

We compute the metrics on a trained \texttt{TopK} SAE using a ResNet-50 backbone with the CUB dataset against a randomly initialized CBM containing the same number of concepts as the standard CBM trained on CUB. Results are reported in Table \ref{tab:sanity_random_cbm}.  The goal of this experiment is to validate the utility of human-labeled concepts.

In Table \ref{tab:sanity_random_cbm},  normal refers to a trained SAE with a trained CBM, while random SAE refers to a trained CBM against a random SAE.

The random CBM has a much lower coverage, F1, and $\rho_{uni}$. The $R^2$ drops considerably. These results show that the random CBM obtains the weakest alignment with the SAE, confirming the role played by human-provided concepts in the CBM references used across experiments. The metrics are sensitive to semantic structure and depend on human-labeled concepts learned by the CBM to reach a high score. 

This sanity check supports the use of human-labeled concepts and CBMs as a semantic anchor.

\begin{table}[t]
\centering
\scriptsize
\setlength{\tabcolsep}{4pt}
\renewcommand{\arraystretch}{1.05}
\caption{Sanity-check metrics for ResNet-50 on CUB with random CBM and random SAE variants.}
\label{tab:sanity_random_cbm}
\begin{tabular}{l l l c c c c c}
\toprule
\textbf{Method} &
\textbf{Dataset} &
\textbf{Backbone} &
\textbf{Sparsity} &
$\mathrm{Coverage}$ &
$R^2$ &
$\mathrm{Top}\text{-}k~F_1$ &
$\rho_{\mathrm{uni}}$ \\
\midrule

random CBM &
CUB & R50 &
$0.005$ &
$0.0983$ &
$0.7066$ &
$0.0105$ &
$0.0401$ \\

normal &
CUB & R50 &
$0.005$ &
$0.2483$ &
$\mathbf{0.9566}$ &
$\mathbf{0.0920}$ &
$\mathbf{0.0829}$ \\

random SAE &
CUB & R50 &
$0.005$ &
$\mathbf{0.6500}$ &
$0.8503$ &
$0.0599$ &
$0.0600$ \\

\bottomrule
\end{tabular}
\end{table}

\section{Evolution of metrics across training}\label{Apendix:training}

To visualize the evolution of the metrics across the training, we train a CBM on the CUB datasets with a ResNet-50 backbone. We then compute the metrics with this CBM every 5 epochs of the training of the \texttt{TopK} SAE. 

Figure \ref{fig:Metrics_evolution} reports the evolution of the metrics during the training. The training loss decreases gradually across the training, showing that the SAE progressively improves its reconstruction. While $R^2$, topk F1, and $\rho_{uni}$ improve overall, indicating that the SAE activations become increasingly predictive of the CBM concepts.

In contrast, the behavior of the coverage confirms that an untrained or weakly structured dictionary can obtain a high apparent coverage without yielding meaningful concept correspondence. This observation is consistent with the sanity check and the results of Appendix \ref{sec:SanityV2}. 

The alignment ($\rho_{uni}$) rises over the training, suggesting that the training produces atoms that are directionally closer to CBM concepts, and the optimization redistributes information across more atoms that activate on examples similar to those activating the CBM concepts.

This Figure confirms that reconstruction quality and semantic alignment do not evolve identically and that several complementary metrics are needed to assess the SAE to CBM correspondence. 

\begin{figure*}[t]
\vspace{-5mm}
\begin{center}
\includegraphics[width=1\linewidth]{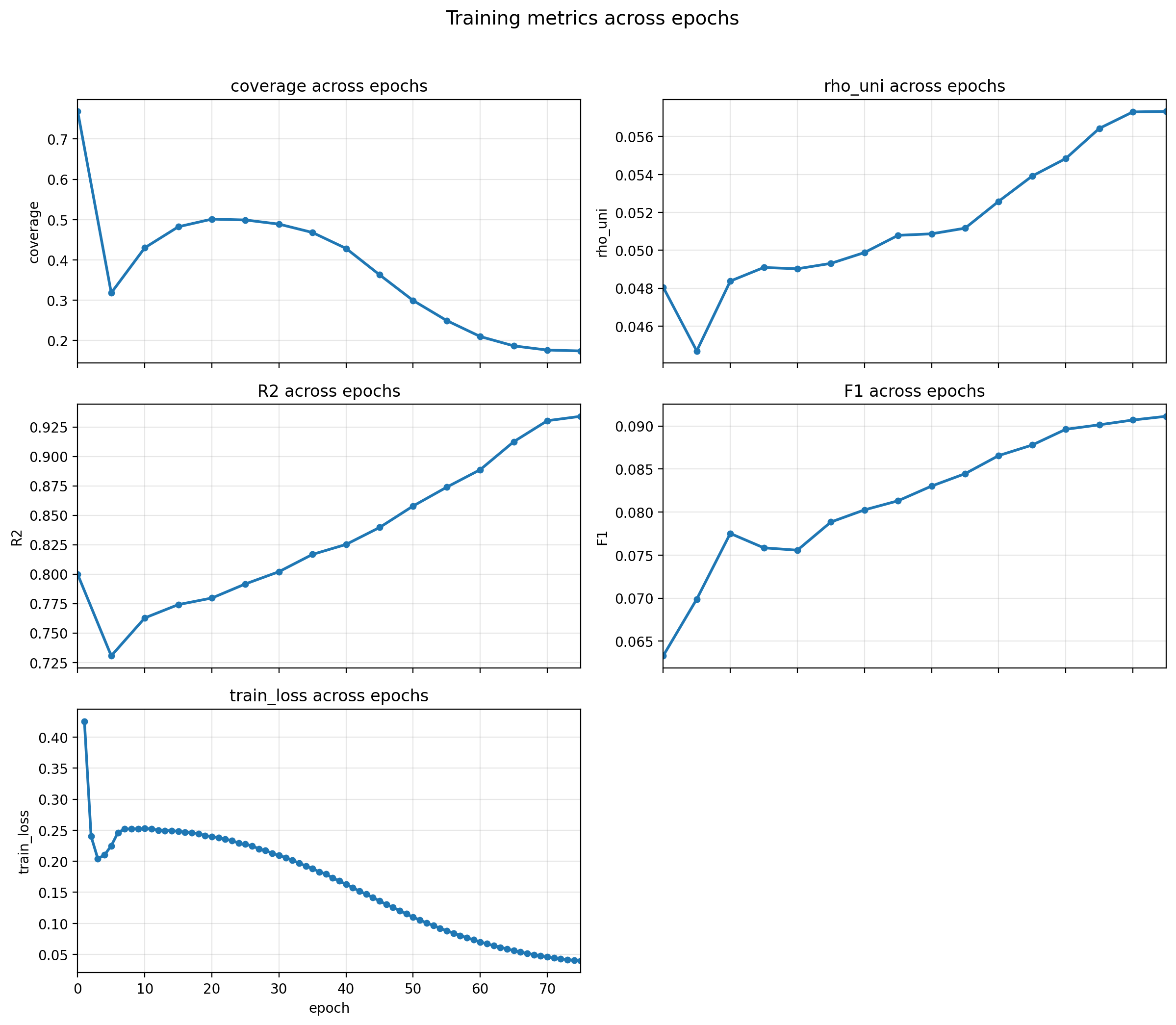}
\end{center}
\vspace{-4mm}
\caption{
Evolution of Coverage, $\rho_{uni}$, $R^2$, $F1$ and Loss of the \texttt{TopK} SAE across epochs of the training from epoch 0 to 75. 
}
\label{fig:Metrics_evolution}
\end{figure*}

\section{Application: analysis of the SAE representation on Husky Vs Wolf dataset}\label{sec:Application}

To further investigate how SAEs and CBMs encode semantic information, we conduct a focused experiment on a two-class subset of ImageNet composed of \emph{Husky} and \emph{Wolf} images, following the experimental setting proposed in~\citep{fel2023craft}. We rely on the annotated concept set provided in~\citep{chorna2025concept}, which offers image-level and class-level semantic attributes for these two classes. These annotations serve as ground-truth concepts for the CBM branch.

We begin by training a ResNet-50 backbone for binary classification on the Husky/Wolf subset. After training the classifier, we attach both a \texttt{TopK} SAE and a CBM module to the final convolutional block of the frozen backbone. The CBM branch predicts the concept vector through its concept-extractor matrix $W_c$, while the SAE learns a dictionary $D$ that provides a sparse generative decomposition of the features. This parallel construction allows us to analyze to what extent the learned SAE atoms align with 'human-based-words' CBM concepts.

\paragraph{Concept matching between SAE atoms and CBM concepts.}
To quantify semantic alignment between the SAE and the CBM, we compute a concept–dictionary similarity matrix $S$ defined as
\[
S_{i,j} = \mathrm{corr}(D_j, W_{c,i}),
\]
where $D_j$ denotes the $j$-th dictionary atom of the SAE and $W_{c,i}$ the $i$-th concept direction learned by the CBM. For each atom $j$, we identify its most associated concept by selecting
\[
\hat{i}(j) = \arg\max_i |S_{i,j}|.
\]
This establishes a direct one-to-one correspondence between SAE atoms and CBM concepts. In addition to matching dictionary atoms, we also align the latent representations of the SAE with CBM concepts. For each latent embedding $\mathbf{Z}^{\text{SAE}}_{\cdot j}$, we first identify the most influential atom by selecting the dictionary element that exhibits the strongest activation for that sample. Using the atom-to-concept assignment described above, we then propagate this match to the latent representations. This yields a consistent alignment across three elements: the dictionary atoms $D_j$, the CBM concept directions $W_{c,i}$, and the latent activations $\mathbf{Z}^{\text{SAE}}_{\cdot j}$. At this stage, both the SAE latent space and its dictionary are mapped onto the CBM semantic space, enabling a direct comparison between the representations learned by the two models.

\paragraph{UMAP visualization of SAE dictionary structure.}
Figure~\ref{fig:resultsUMAP} shows a UMAP projection of the SAE dictionary atoms, providing a global view of their geometric organization. To further analyze semantic alignment, we compute cosine similarities between selected CBM concepts and all SAE atoms. The UMAP plots reveal that only a small subset of atoms exhibits strong similarity to any given concept, while most atoms show weak or diffuse alignment. This indicates that a limited number of dictionary atoms capture the majority of the shared semantic signal between the SAE and the CBM, whereas many other atoms correspond to fine-grained or heterogeneous features that do not map cleanly onto human-annotated concepts.

\paragraph{Concept distribution across classes.}
To quantify how the two classes differ in terms of their concept activations, we assign to each sample the most influential concept determined via its highest correlation with the SAE dictionary as described above. This enables us to compute per-class histograms, shown in Figure~\ref{fig:histo}. These histograms highlight strong biases: certain concepts appear much more frequently in Husky images than in Wolf images, and vice versa. Such asymmetries echo known dataset biases in the Husky/Wolf classification task (e.g., wolves often appear in snowy or wild backgrounds whereas huskies appear in urban or domestic contexts).  
This demonstrates that the SAE+CBM analysis pipeline is able to reveal and disentangle these latent dataset biases without any explicit guidance, reinforcing the interpretability benefits of combining sparse generative modeling with concept supervision.

\begin{figure*}[t]
\vspace{-8mm}
\begin{center}
\begin{tabular}{ccc}
  \includegraphics[width=0.29\linewidth]{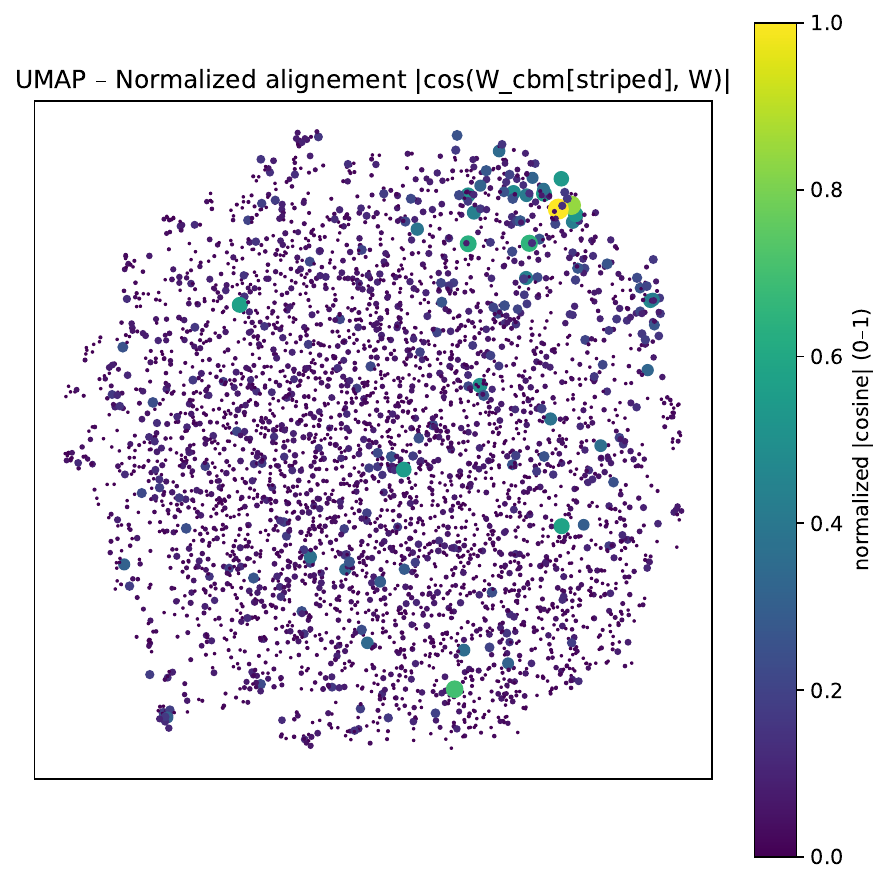} & 
  \includegraphics[width=0.29\linewidth]{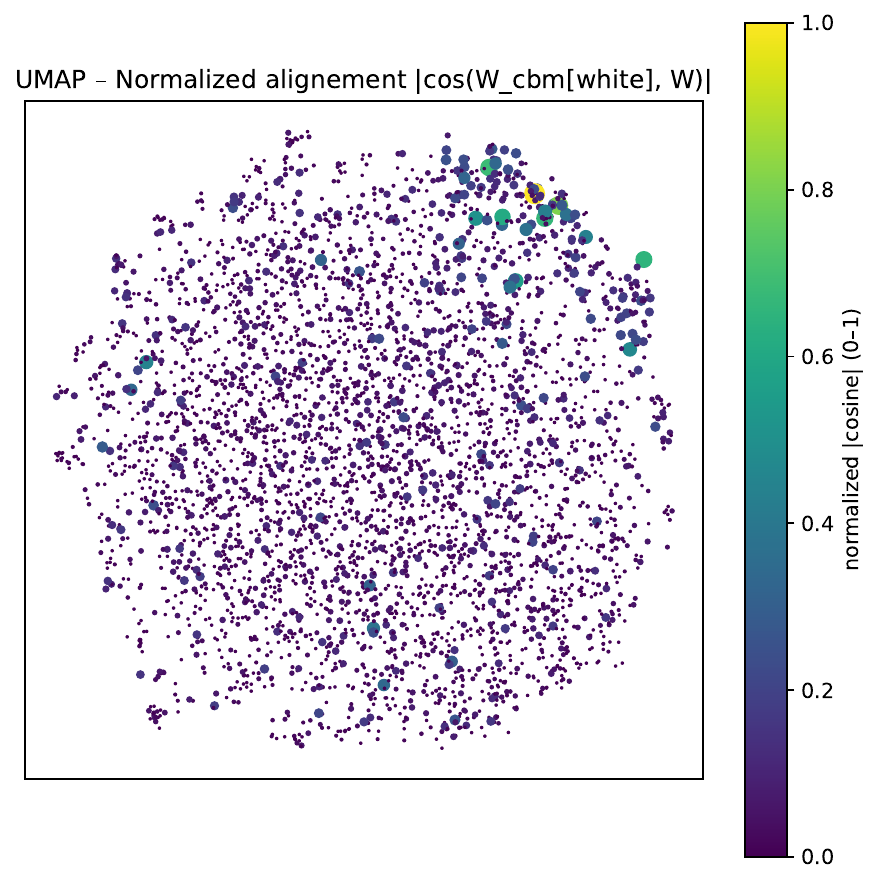} &
  \includegraphics[width=0.29\linewidth]{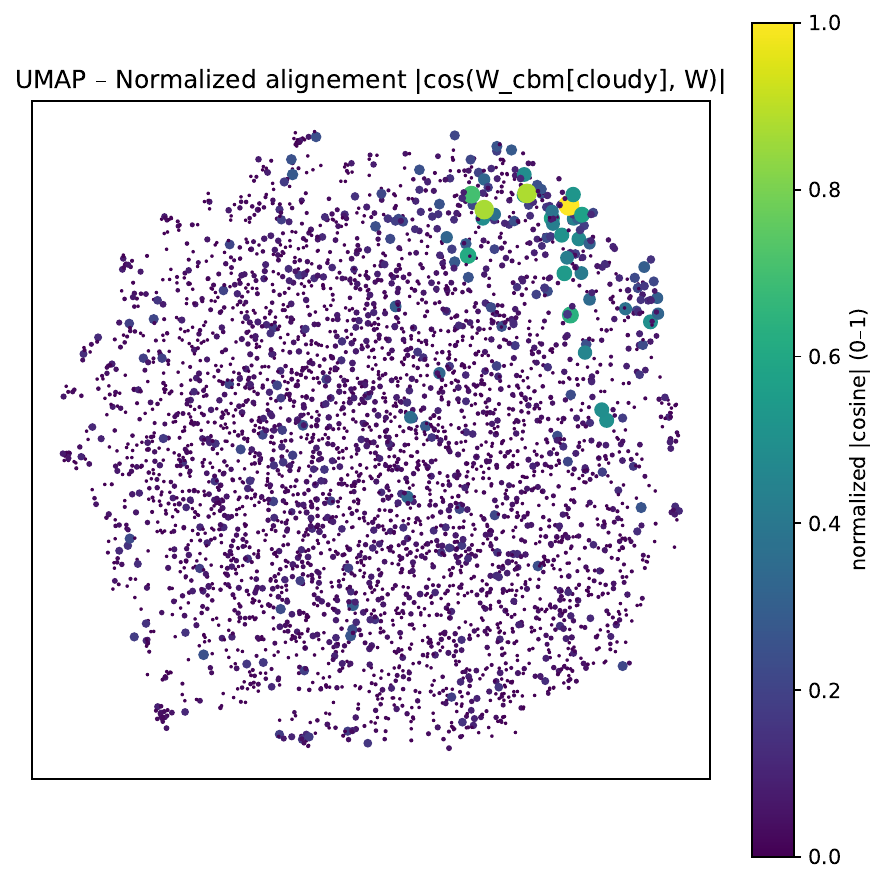} \\
  (a) concept ''striped'' & (b) concept ''white'' & (c) concept ''cloudy'' \\
  \includegraphics[width=0.29\linewidth]{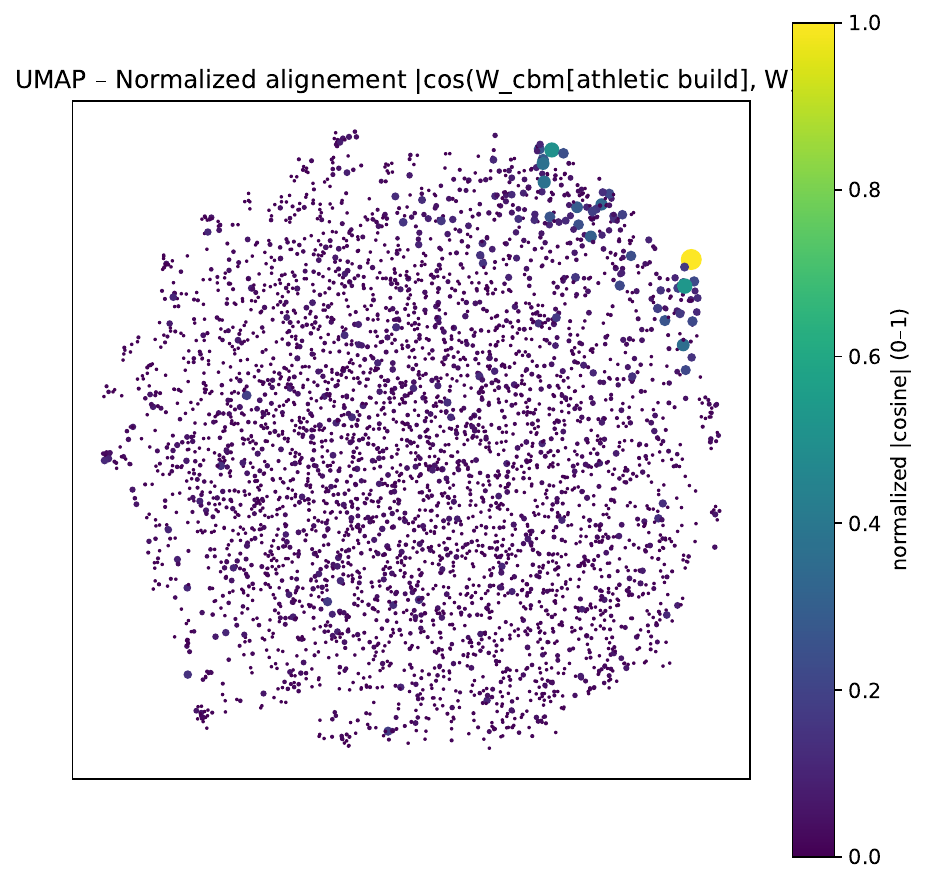} &
  \includegraphics[width=0.29\linewidth]{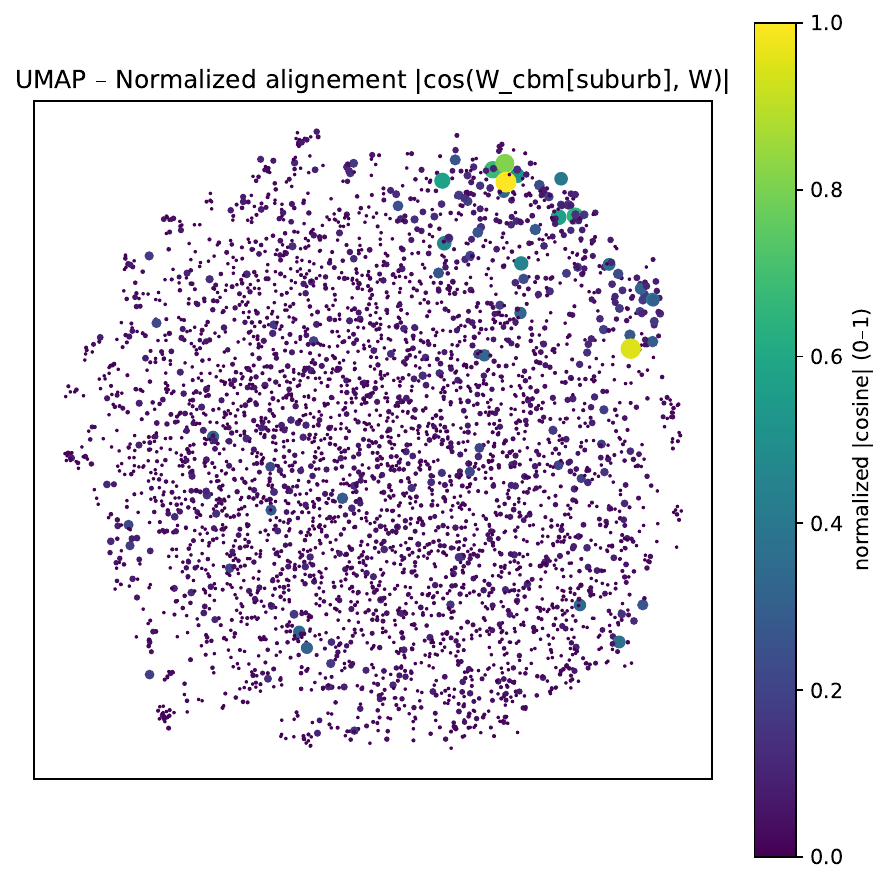} &
  \includegraphics[width=0.29\linewidth]{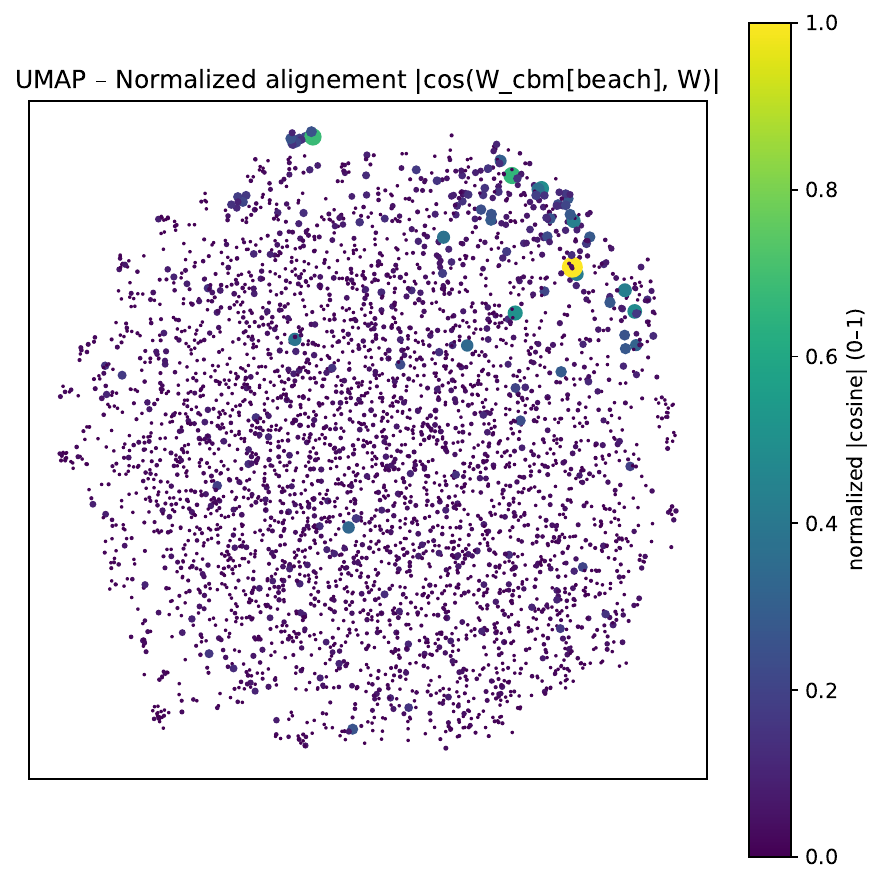}
  \\
  (d) concept ''atheltic build'' & (e) concept ''suburb' & (f) concept ''beach'' \\
  \includegraphics[width=0.29\linewidth]{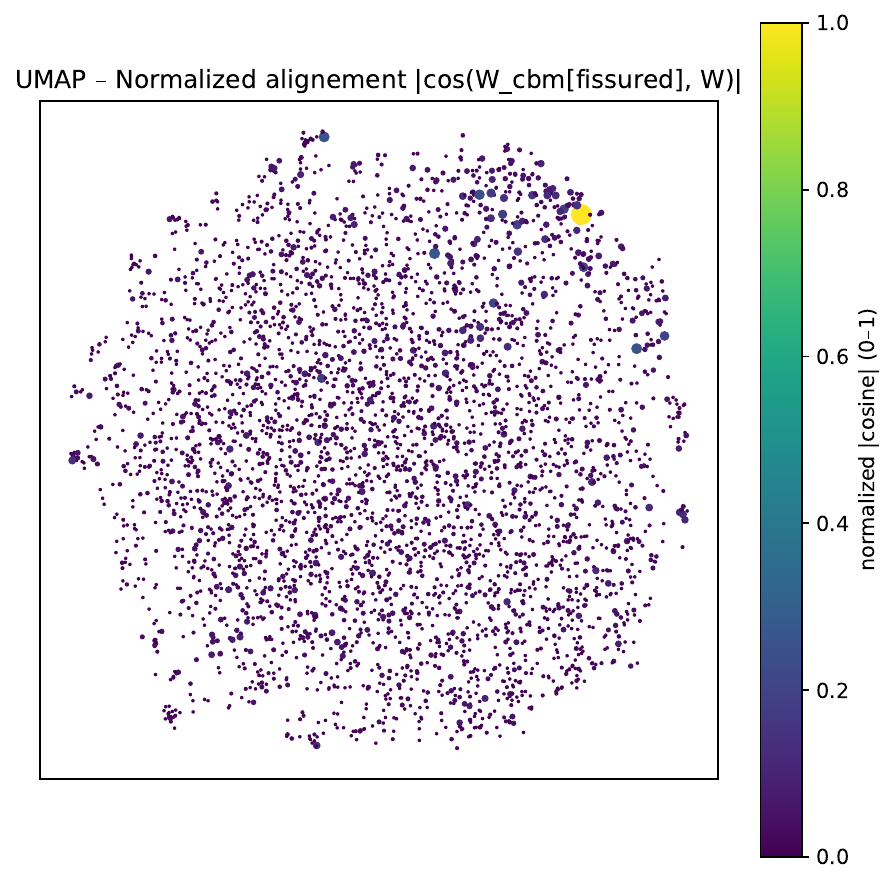} &
  \includegraphics[width=0.29\linewidth]{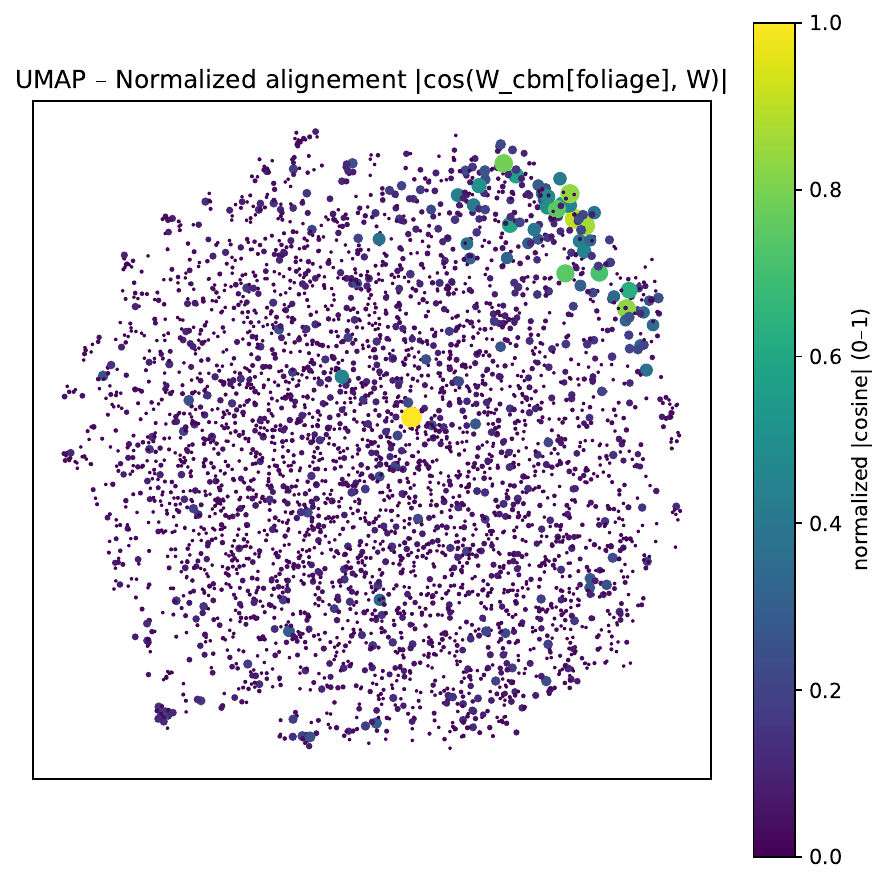} &
  \includegraphics[width=0.29\linewidth]{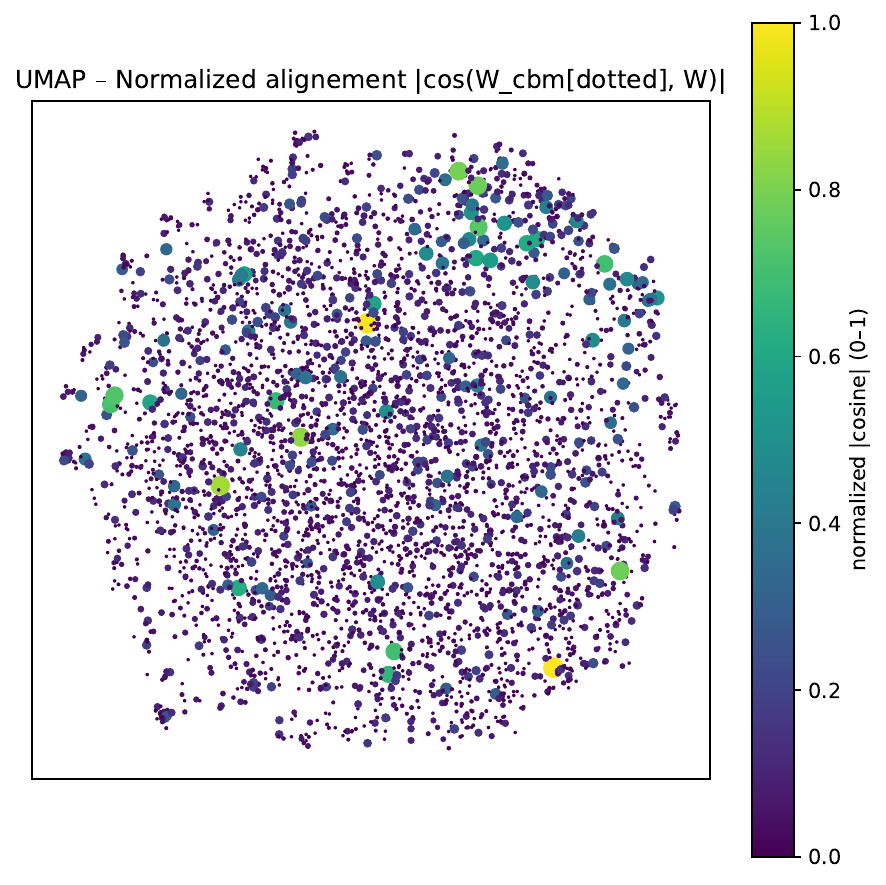}
  \\
  (g) concept ''fissured'' & (h) concept ''foliage' & (i) concept ''dotted'' \\
\end{tabular}
\end{center}
\caption{
Cosine similarities between selected CBM concepts and all SAE dictionary atoms, visualized through UMAP. Strong responses concentrate around a limited number of atoms, suggesting that only a small fraction of SAE directions encode concepts shared with the CBM.
}
\label{fig:resultsUMAP}
\end{figure*}

\begin{figure*}[t]
\vspace{-5mm}
\begin{center}
\includegraphics[width=1\linewidth]{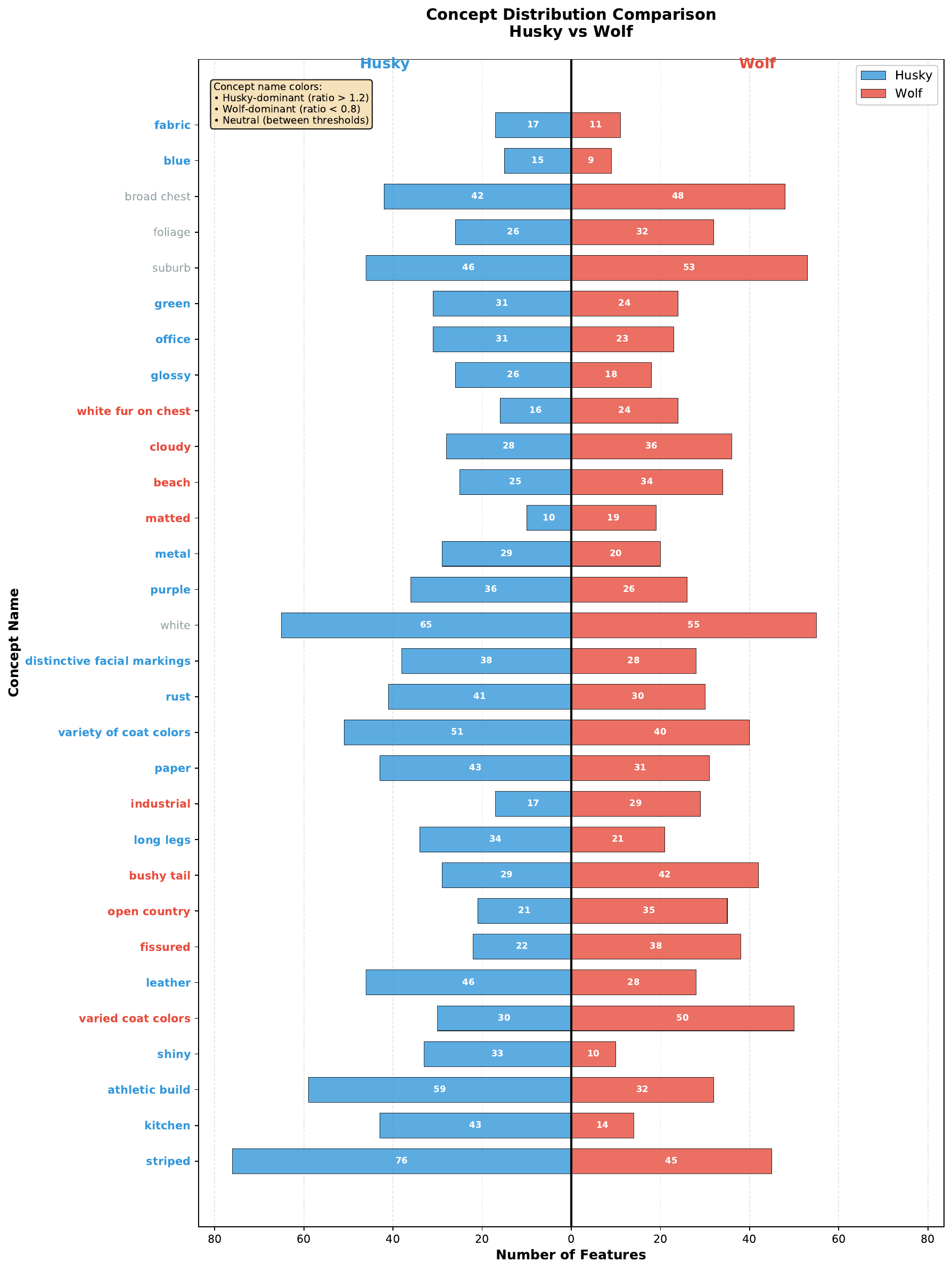}
\end{center}
\vspace{-4mm}
\caption{
Class-wise concept histograms for Husky and Wolf images. Each sample is assigned the concept with maximum correlation to its SAE representation, revealing clear concept-frequency biases between the two classes.
}
\label{fig:histo}
\end{figure*}

 \section{Effect of the number of CBM concepts on SAE metrics on Husky Vs Wolf dataset}
\label{sec:ApplicationV2}

In this experiment, we study how the number of concepts used in a CBM influences the evaluation metrics of a fixed SAE. All experiments are conducted on the \textit{Husky vs.\ Wolf} dataset with the concepts from ~\citep{chorna2025concept}, a standard benchmark for analyzing spurious correlations and classifier bias.

\paragraph{Experimental setup.}
We first train a ResNet-50 backbone on the Husky vs.\ Wolf classification task. On top of this frozen backbone, we train multiple CBMs that differ only in the number of concepts they use (70, 35, and 10). 70 is the official number of concepts and to reduce we discard the less frequent concepts in each class. Importantly, the SAE is trained once and kept fixed across all experiments, with a latent dimension of 4096. For each CBM configuration, we evaluate the same SAE using our suite of alignment, sparsity, coverage, and downstream performance metrics.

\begin{table}[t]
\centering
\caption{Effect of the number of CBM concepts on SAE (\texttt{TopK} SAE) metrics (Husky vs.\ Wolf, ResNet-50 backbone).}
\label{tab:nb_concepts}
\resizebox{\columnwidth}{!}{
\begin{tabular}{l c c c c c c c c c c}
\toprule
\textbf{SAE Type} & \textbf{\# CBM Concepts} & \textbf{Dim. SAE} & \textbf{Backbone} & \textbf{Sparsity} & $\boldsymbol{\rho_{\text{geom}}}$ & $\boldsymbol{\rho_{\text{act}}}$ & \textbf{Coverage} & \textbf{CBM Acc.} \\
\midrule
\texttt{TopK}  & 70 & 4096 & ResNet-50 & 0.9542 & \textbf{0.0757} & \textbf{0.0773} & \textbf{0.7125} & \textbf{95\%} \\
\texttt{TopK}  & 35 & 4096 & ResNet-50 & 0.9525 & 0.0628 & 0.0658 & 0.5659 & 93\% \\
\texttt{TopK}  & 10 & 4096 & ResNet-50 & 0.9523 & 0.0500 & 0.0551 & 0.5719 & 80\% \\
\bottomrule
\end{tabular}
}
\end{table}

\paragraph{Discussion.}
Table~\ref{tab:nb_concepts} highlights a clear relationship between the number of CBM concepts and the SAE evaluation metrics. As the number of CBM concepts increases, we observe consistent improvements in both geometric and activation-based alignment metrics ($\rho_{\text{geom}}$, $\rho_{\text{act}}$). This indicates that a richer concept vocabulary allows the CBM to better align with the structured representations learned by the SAE.

Coverage also increases substantially with more CBM concepts, reflecting a greater fraction of SAE concepts being meaningfully utilized by the CBM. In contrast, sparsity remains largely unchanged across configurations, as it is an intrinsic property of the SAE and independent of the CBM design.

Finally, CBM classification accuracy improves markedly as the number of concepts increases, suggesting that higher concept capacity enables the CBM to better capture discriminative features while maintaining interpretability.


\end{document}